\documentclass{article}[12pt]

\usepackage[margin=1.1in]{geometry}
\usepackage[utf8]{inputenc}

\usepackage{amsmath,amssymb,amsthm}
\usepackage{hyperref}
\hypersetup{colorlinks,allcolors=[rgb]{0,0.13672,0.95703}}
\usepackage{cleveref}
\usepackage[affil-sl]{authblk}
\usepackage{graphicx}
\usepackage{caption}
\usepackage{subcaption}
\usepackage{enumitem}
\usepackage{bbm}
\usepackage{wrapfig}
\usepackage[citestyle=numeric,natbib=true,bibstyle=numeric,maxbibnames=99]{biblatex}
\usepackage{makecell}
% \usepackage[
% backend=biber,
% style=alphabetic,
% citestyle=authoryear
% ]{biblatex}

\addbibresource{main.bib}

\newcommand\diag{{\operatorname{diag}}}
   %%renew

\newcommand\rk{{\mathrm{rank}}}

\renewcommand\l{{\mathcal{L}}}

\newcommand\R{{\mathbb{R}}}
\renewcommand\S{{\mathcal{S}}}
\newcommand\B{{\mathcal{B}}}
\newcommand\F{{\mathcal{F}}}

\renewcommand\ln{{\mathrm{ln}}}

\def\rva{{\mathbf{a}}}

\def\rvu{{\mathbf{i}}}

\def\rvu{{\mathbf{u}}}
\def\rvv{{\mathbf{v}}}
\def\rvw{{\mathbf{w}}}
\def\rvx{{\mathbf{x}}}
\def\rvy{{\mathbf{y}}}
\def\rvz{{\mathbf{z}}}

% Random matrices

\def\rmH{{\mathbf{H}}}

\def\rmU{{\mathbf{U}}}
\def\rmV{{\mathbf{V}}}
\def\rmW{{\mathbf{W}}}

\def\pls{{PL$^*$ }}
\def\plse{PL$^*_\epsilon$ }

\newtheorem{lemma}{Lemma}%[section]
\newtheorem{thm}{Theorem}%[section]
%[section]
%[section]
\newtheorem{prop}{Proposition}%[section]

 \theoremstyle{definition}

\newtheorem{defi}{Definition}%[section]
%[section]
%[section]
%[section]

\newtheorem{remark}{Remark}%[section]
\newtheorem{assumption}{Assumption}%[section]
\newcommand{\secref}[1]{\hyperref[#1]{\textit{SI Appendix, section \ref{#1}}}}

% \title{Toward a theory of optimization for over-parameterized systems of non-linear equations: the lessons of deep learning}
\title{Loss landscapes and optimization in over-parameterized  non-linear systems and neural networks}

%\title{Loss Landscape and  Optimization for Over-Parameterized Systems of Non-linear Equations: the Lessons of Deep Learning}

% Use letters for affiliations, numbers to show equal authorship (if applicable) and to indicate the corresponding author
\author[a]{Chaoyue Liu}
\author[b,c]{Libin Zhu} 
\author[c]{Mikhail Belkin}

\affil[a]{Department of Computer Science and Engineering, The Ohio State University}
\affil[b]{Department of Computer Science and Engineering, University of California, San Diego} 
\affil[c]{Halicioğlu Data Science Institute, University of California, San Diego}

% Please give the surname of the lead author for the running footer
% \leadauthor{Liu} 

% Please add a significance statement to explain the relevance of your work
% \significancestatement{Authors must submit a 120-word maximum statement about the significance of their research paper written at a level understandable to an undergraduate educated scientist outside their field of speciality. The primary goal of the significance statement is to explain the relevance of the work in broad context to a broad readership. The significance statement appears in the paper itself and is required for all research papers.}

% Please include corresponding author, author contribution and author declaration information
% \authorcontributions{Please provide details of author contributions here.}
% \authordeclaration{Please declare any competing interests here.}
% \equalauthors{\textsuperscript{1}A.O.(Author One) contributed equally to this work with A.T. (Author Two) (remove if not applicable).}
% \correspondingauthor{\textsuperscript{1}To whom correspondence should be addressed. E-mail: liu.2656@osu.edu}

% At least three keywords are required at submission. Please provide three to five keywords, separated by the pipe symbol.
% \keywords{Keyword 1 $|$ Keyword 2 $|$ Keyword 3 $|$ ...} 

% \dates{This manuscript was compiled on \today}
% \doi{\url{www.pnas.org/cgi/doi/10.1073/pnas.XXXXXXXXXX}}

\begin{document}

\maketitle

\begin{abstract}
The  success of deep learning is due, to a large extent, to the remarkable effectiveness of gradient-based optimization methods applied to large neural networks. 
The purpose of this work is to propose a modern view and a general mathematical  framework for loss landscapes and efficient optimization  in over-parameterized machine learning models and systems of non-linear equations, a setting that includes over-parameterized deep neural networks. Our starting observation is that optimization problems corresponding to such systems are generally not convex, even locally. We argue that instead they satisfy PL$^*$, a variant of the  Polyak-{\L}ojasiewicz condition on most (but not all) of the parameter space, which guarantees both the existence of solutions 
and efficient optimization by (stochastic) gradient descent (SGD/GD).   The PL$^*$ condition of these systems is closely related to the condition number of the tangent kernel associated to a non-linear system showing how a PL$^*$-based non-linear theory parallels  classical analyses of over-parameterized linear equations. We show that wide neural networks satisfy the PL$^*$ condition, which explains the (S)GD convergence to a global minimum. Finally we propose a relaxation of the \pls condition applicable to ``almost'' over-parameterized systems.

%Finally, we point out that tangent kernels associated to certain large system may be far from constant, even locally. Yet, our analysis still allows to demonstrate existence of solutions and convergence of gradient descent and SGD.  
\end{abstract}

\section{Introduction}
% ----------modification by Libin------\\
% 1. removed definition of PL$^*_\epsilon$ in preliminary.\\
% 2. inserted figure 2 \\
% 3. modified the caption of figure 5\\

%Logic: large neural networks  can be optimized by gradient methods. This is not in line with convex analyses. Recently iterpolation has been proposed as an explanation. 
%\misha{Fig.3 needs to be above related work.}

%Libin: appendix D (bottleneck. Add a comment about input transform. 

%-------------\\

A singular feature of modern machine learning is a large  number of trainable model parameters. Just in the last few years we have seen state-of-the-art models grow from tens or hundreds of millions parameters to much larger systems with hundreds billions~\cite{brown2020language} or even trillions parameters~\cite{fedus2021switch}. Invariably these models are trained by gradient descent based methods, such as Stochastic Gradient Descent (SGD) or Adam~\cite{kingma2014adam}.  Why are these local gradient  methods so effective in optimizing  complex highly non-convex systems? In the past few years an emerging understanding of gradient-based methods have  started to focus on the insight that optimization dynamics of ``modern'' over-parameterized models with more parameters than constraints are very different from those of ``classical'' models when the number of constraints exceeds the number of parameters. 
% \chaoyue{The last sentence always sounds weird to me. It sounds like prior works already have the modern view, and purposely utilized it to analyzed optimization.}
% ~\cite{du2018gradientshallow,allen2019convergence,lee2019wide,bartlett2019gradient}
%, while remains quite mathematical. 

%The underlying understanding is that optimization landscapes for ``modern'' systems with more parameters than constraints are very different from ``classical'' landscapes when the number of constraints exceeds the number of parameters. 

The goal of this paper is to provide a modern view of the optimization landscapes, isolating key mathematical and conceptual elements that are essential for an optimization theory of over-parameterized models.

%The goal of this paper it to isolate key mathematical elements and 
%to develop a broad framework for understanding properties of optimization landscapes of over-parameterized models. 

%{\color{blue}
%The key ingredient is that optimization landscapes for ``modern'' systems with more parameters than constraints (over-parameterization) should be very different from ``classical'' landscapes when the number of constraints exceeds the number of parameters (under-parameterization).
%The goal of this paper is to provide a modern view of the optimization landscapes and to isolate key mathematical and conceptual elements that are essential to build an optimization theory for over-parameterized models.}

We start by characterizing a key difference between under-parameterized and over-parameterized landscapes. While both are generally non-convex, the nature of the non-convexity is rather different: under-parameterized landscapes are (generically) locally convex in a sufficiently small neighborhood of a local minimum. Thus classical analyses apply, if only locally, sufficiently close to a minimum. 
In contrast, over-parameterized systems are ``essentially'' non-convex systems, not even in  arbitrarily small neighbourhoods around global minima. 

Thus, we cannot expect the extensive theory of convexity-based analyses to apply to such over-parameterized problems. In contrast, we argue that these systems typically satisfy the Polyak-{\L}ojasiewicz  condition, or more precisely, its slightly modified variant -- PL$^*$ condition, on most (but not all) of the parameter space.  This condition ensures existence of solutions and convergence of GD and SGD, if it holds in a ball of sufficient radius. Importantly, we  show that sufficiently wide neural networks satisfy the PL$^*$ condition around their initialization point, thus guaranteeing convergence. 
In addition, we show how the PL$^*$ condition can be relaxed without significantly changing the analysis. We conjecture that many large systems behave as if the were over-parameterized along the stretch of their optimization path from initialization to the early stopping point.

In a typical supervised  learning task, given a training dataset of  size $n$, $\mathcal{D}=\{\rvx_i,y_i\}_{i=1}^n$, $\rvx_i \in \R^d, y\in\R$, and a parametric family of models $f(\rvw;\rvx)$, e.g., a neural network, one aims to find a  model with parameter vector $\rvw^*$,  that fits the training data, i.e., 
\begin{equation}
f(\rvw^*;\rvx_i) \approx y_i, \quad i = 1,2, \ldots, n.
\end{equation}

Mathematically, it is equivalent to solving, exactly or approximately, a system of $n$  equations\footnote{The same setup works for multiple outputs. For examples for multi-class classification problems both the prediction $f(\rvw;\rvx_i)$ and labels $\rvy_i$ (one-hot vector) are $c$-dimensional vectors, where $c$ is the number of classes. In this case, we are in fact solving $n\times c$  equations.  
Similarly, for multi-output regression with $c$ outputs, we have   $n\times c$ equations. }.  Aggregating them in a single map (and suppressing the dependence on the training data in the notation) we write:
\begin{equation}\label{eq:problem}
    \F(\rvw) = \rvy, ~~\text{where} ~\rvw\in \R^m, ~\rvy\in \R^n,~~\F(\cdot): \mathbb{R}^m \to \R^n.
\end{equation}
Here   $(\F(\rvw))_i := f(\rvw;\rvx_i)$.

 The system in Eq.(\ref{eq:problem}) is solved through minimizing a certain loss function $\l(\rvw)$, e.g.,  the square loss $$
 \l(\rvw) = \frac{1}{2}\|\F(\rvw)-\rvy\|^2 = \frac{1}{2}\sum_{i=1}^N (f(\rvw;\rvx_i)-y_i)^2
 $$
 constructed so  that the solutions of Eq.(\ref{eq:problem}) are global minimizers of $\l(\rvw)$. 
  %$\l(\rvw) = \frac{1}{2}\sum_{i=1}^N \|f(\rvw;\rvx_i)-y_i\|^2$. 
  This is a non-linear least squares problem, which is well-studied under classical under-parameterized settings (see~\cite{nocedal2006numerical}, Chapter 10).
 An exact solution of Eq.(\ref{eq:problem}) corresponds to interpolation, where a predictor fits the data exactly.

 As we discuss below, for over-parameterized systems ($m>n$), we expect exact solutions to exist.

 \begin{figure}
    \centering
    \begin{subfigure}[t]{0.45\textwidth}
        \centering
        \includegraphics[width=\linewidth]{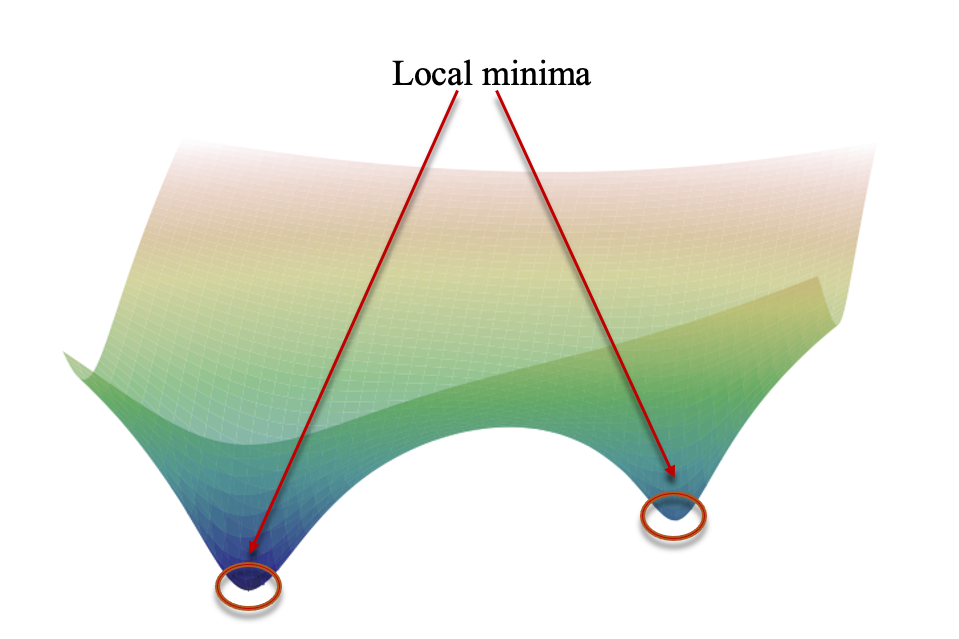} 
        \caption{Loss landscape of under-parameterized models} \label{fig:loc}
    \end{subfigure}
    \hfill
    \begin{subfigure}[t]{0.45\textwidth}
        \centering
        \includegraphics[width=\linewidth]{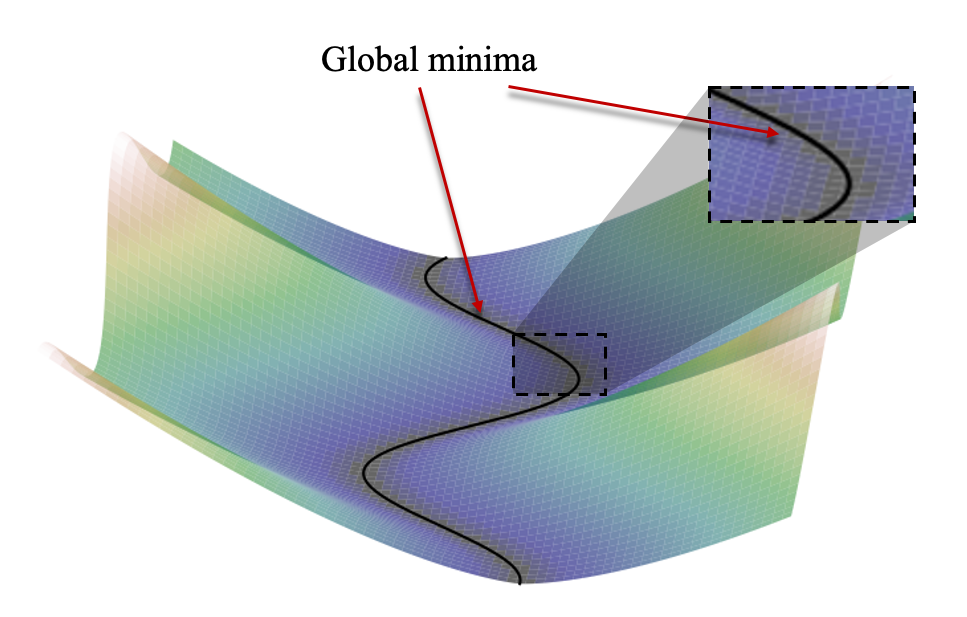} 
        \caption{Loss landscape of over-parameterized models} \label{fig:glob}
    \end{subfigure}
     \caption{Panel (a): Loss landscape is locally convex at local minima. Panel (b): Loss landscape incompatible with local convexity as the set of global minima is not locally linear. }\label{fig:loc&glob}
  \end{figure}

% \noindent Specifically, the contributions of this paper are as follows:

%\begin{itemize}
\paragraph{Essential non-convexity.} 
    Our starting point, discussed in detail in Section~\ref{sec:nonconvex}, is the observation that the loss landscape of an over-parameterized system is generally not convex in any neighborhood of any global minimizer. This is different from the case of under-parameterized systems, where the   loss landscape is globally not convex, but still typically locally convex, in a sufficiently small neighbourhood of a  (locally unique) minimizer.  In contrast, the set of solutions of over-parameterized systems is generically a manifold of positive dimension~\cite{cooper2018loss} (and indeed, systems large enough have no non-global minima~\cite{li2018over,nguyen2018loss,yu1995local}). Unless the solution manifold is linear (which is not generally the case) the landscape cannot be locally convex. The contrast between over and under-parametrization illustrated pictorially in Fig~\ref{fig:loc&glob}.

    %For over-parameterized systems, the non-convexity is  {\it essential}:  the loss function $\l(\rvw)$ is typically non-convex in {\it any} neighborhood of a minimizer. Indeed,  the set of solutions of an over-parameterized system  (and, furthermore,  sufficiently large systems  have no non-global minima, e.g.,~\cite{li2018over,nguyen2018loss,yu1995local}).  
    The non-zero curvature of the curve of global minimizers at the bottom of the valley in  Fig~\ref{fig:loc&glob}(b) indicates the essential non-convexity of the landscape. In contrast, an under-parameterized landscape generally has multiple isolated local minima with positive definite Hessian of the loss, where the function is locally convex. Thus we conclude that
    \begin{center} {\it  
    Convexity is  not the right framework for analysis of over-parameterized systems, even locally. }
    \end{center}

  %  As a consequence, on the set of solutions, the Hessian matrices of the loss function are positive semi-definite but not positive definite.  

   % Hence,
%    convex analysis is  not the right approach to analyze over-parameterized systems, even locally. 
%Indeed,  the loss landscapes for under-parameterized systems typically have isolated local minimizers with positive definite Hessian. 
%Indeed,  the set of solutions of an over-parameterized system is generically a manifold of positive dimension~\cite{cooper2018loss}.  As a consequence, on the set of solutions, the Hessian matrices of the loss function are positive semi-definite but not positive definite. As we shall see, convexity of the loss landscape could not hold within an arbitrary neighborhood around a solution, unless the set of solutions within the neighborhood is a linear manifold. However, due to the non-linear nature of the system of equations, the the set of global minimizers is barely linear.
%Furthermore,  sufficiently large systems  have no non-global minima (e.g.,~\cite{li2018over,nguyen2018loss,yu1995local}). 
%In contrast, an under-parameterized landscape generally has multiple isolated local minima with positive definite Hessian of the loss, where the function is locally convex. This is illustrated pictorially in Fig~\ref{fig:loc&glob}.
%The non-zero curvature of the curve of global minimizers in  Fig~\ref{fig:loc&glob}(b) indicates the essential non-convexity of the landscape. 

%\smallskip
%\smallskip

    Without  assistance from local convexity, what  alternative mathematical framework can be used to analyze loss   landscapes and  optimization behavior of  non-linear over-parameterized systems? % Is there an alternative  mathematical framework that describes loss landscapes and optimization for such systems? 
%    of such super complicated systems and facilitates the optimization analysis on them? 
    
    In this paper we argue that such a general framework is provided by the Polyak-{\L}ojasiewicz condition~\cite{polyak1963gradient,lojasiewicz1963topological}, or, more specifically, by its variant that we call PL$^*$ condition (also called local PL condition in~\cite{oymak2019overparameterized}). We say that a non-negative function $\l$ satisfies $\mu$-PL$^*$ on a set ${\cal S}\subset \R^m$ for $\mu>0$, if 
%    In this paper, we provide positive answers to the above questions.  We show that a condition named Polyak-Lojasiewicz (PL) condition\footnote{We use a slightly modified variant of the PL condition, called PL$^*$ condition.}~\cite{polyak1963gradient,lojasiewicz1963topological}, which states that
    \begin{equation}
        \|\nabla \l (\rvw)\|^2 \ge \mu \l (\rvw), ~~\forall~\rvw \in \mathcal{S}.
    \end{equation}
    %with $\mu>0$ independent of $\rvw$,
    %is a good replacement of convexity in the purpose of gradient-based optimization for large non-linear systems. 
    
    %Note that a strictly convex function automatically satisfies the PL$^*$~\cite{???} that any 
    
    %Note that this is a much simpler first order condition than the corresponding analytic condition for convexity, the positive definiteness of the Hessian.
   
  % that the positive definiteness of the Hessian, a (second order) analytical condition for convexity.  
   
  We will now outline some key reasons why PL$^*$ condition  provides a general framework for analyzing over-parameterized systems. In particular, we show 
  how it is satisfied by the loss functions of sufficiently wide neural networks, although which are certainly non-convex.

  %including neural networks. ---add a sentence about models---
   
 \paragraph{PL$^*$ condition $\implies$ existence of solutions and exponential convergence of (S)GD.}
 The first key point (see Section~\ref{sec:pl-conv}), is that if $\l$ satisfies the $\mu$-PL$^*$ condition in a ball of radius $O(1/\mu)$ then $\l$ has a global minimum in that ball (corresponding to a solution of  
 Eq.(\ref{eq:problem})). Furthermore, (S)GD initialized at the center of such a ball\footnote{The constant  in $O(1/\mu)$ is different for GD and SGD.} converges exponentially to a global minimum of $\l$. 
 Thus to establish both existence of solutions to Eq.(\ref{eq:problem}) and efficient optimization, it is sufficient to verify the PL$^*$ condition in a ball of a certain radius. 
 
 \paragraph{Analytic form of PL$^*$ condition via the spectrum of the tangent kernel.}
 
 Let 
 $D \F(\rvw)$ be the differential of the map $\F$ at $\rvw$, viewed as a $n\times m$ matrix.
The tangent kernel of $\F$ is defined as an $n \times n$ matrix
 $$
 K(\rvw)= D\F(\rvw)\,D\F^T(\rvw).
 $$
 It is clear that $K(\rvw)$ is a positive semi-definite matrix.  It can be seen (Section~\ref{sec:gen_theory}) that square loss $\l$ is $\mu$-PL$^*$ at $\rvw$, where 
 \begin{equation}\label{eq:lambda_muPL}
 \mu = \lambda_{min}(K(\rvw)),
 \end{equation} 
 is the smallest eigenvalue of the kernel matrix.  
 Thus verification of the PL$^*$ condition reduces to analyzing the spectrum of the tangent kernel matrix associated to $\F$. 
 
 It is worthwhile to compare this to the standard analytical condition for convexity, requiring that the Hessian of the loss function, $H_{\l}$, is positive definite. While these spectral conditions appear to be  similar, the similarity is superficial as $K(\rvw)$ contains first order derivatives, while the Hessian is a second order operator. Hence, as we discuss below, the tangent kernel and the Hessian have very different properties under coordinate transformations and in other settings.

 %it is not, in fact, the case. $K(\rvw)$ contains first order derivatives, while the Hessian contains second derivatives.   

  %Note that this is a much simpler first order condition than the corresponding analytic condition for convexity, the positive definiteness of the Hessian.

 %This is because the first-derivative $DF\in \mathbb{R}^{n\times m}$ becomes easier to be full row-rank as $m$ increases, which implies the tangent kernel $K=DF(DF)^T$ is positive definite and more likely to be uniformly conditioned and hence PL$^*$ condition satisfied by the loss function.

 \paragraph{Why PL$^*$  holds across most of the parameter space for over-parameterized systems.}
 
 We will now discuss the intuition why $\mu$-PL$^*$ for a sufficiently small $\mu$ should  be expected to hold in the domain containing most (but not all) of the parameter space $\R^m$. The intuition is based on simple parameter counting. For the simplest example  consider the case $n=1$.  In that case 
 the tangent kernel is a scalar and
 $K(\rvw) = \|D \F (\rvw)\|^2$  is singular if and only if $\|D \F(\rvw)\|=0$. Thus, by parameter counting, we expect  the singular set $\{\rvw: K(\rvw)=0\}$  to have co-dimension $m$ and thus, generically, consist of isolated points. Because of the Eq.(\ref{eq:lambda_muPL})  
 we generally expect most points sufficiently removed from the singular set to satisfy the PL$^*$ condition.  By a similar parameter counting argument, the singular set of $\rvw$, such that $K(\rvw)$ is not full rank will of co-dimension $m-n+1$. As long as $m>n$, we expect the surroundings of the singular set, where the PL$^*$ condition does not hold,  to be ``small'' compared to the totality of the parameter space. Furthermore, the larger the degree of the model over-parameterization ($m-n$) is, the smaller the singular set is expected to be. This intuition is represented graphically in Fig.~\ref{fig:pldomain_main}.
 
 Note that under-parameterized systems are always rank deficient and have $\lambda_{min}(K(\rvw))\equiv 0$. Hence such systems never satisfy PL$^*$.
  \begin{figure}[t]\
        \centering
        \includegraphics[width=0.9\linewidth]{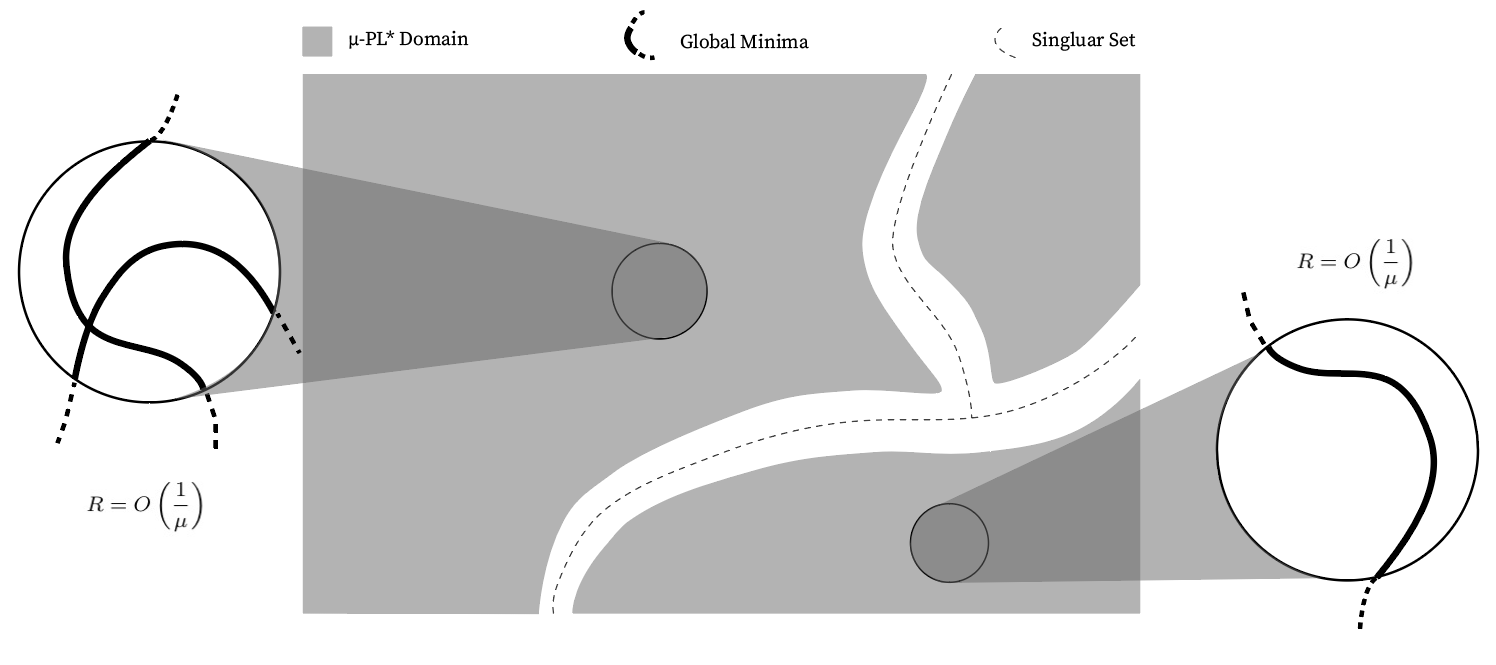} 
        \caption{The loss function $\l(\rvw)$ is $\mu$-PL$^*$ inside the shaded domain. Singular set corresponds to parameters $\rvw$ with degenerate tangent kernel $K(\rvw)$. Every ball of radius $R=O(1/\mu)$ within the shaded set intersects with the set of global minima of $\l(\rvw)$, i.e., solutions to $\F(\rvw)=\rvy$. } \label{fig:pldomain_main}
  \end{figure}
 
 \paragraph{Wide neural networks satisfy PL$^*$ condition.} 

%\misha{change order of presentation}
% While we expect the PL$^*$ condition for over-parameterized systems to hold broadly, proving  it in specific cases requires establishing uniform lower bounds on $\lambda_{min}(K(\rvw))$, a delicate task. In Section~\ref{sec:gen_theory} we identify several tools for proving the \pls condition and proceed to apply them to demonstrate that neural networks are PL$^*$. 
We show that wide neural networks, as special cases of over-parameterized models, are PL$^*$, using the technical tools developed in Section~\ref{sec:gen_theory}. This property of the neural networks is a key step to understand the success of the gradient descent based optimization, as seen in Section~\ref{sec:pl-conv}.
To show the PL$^*$ condition for wide neural networks, a powerful, if somewhat crude, tool is provided by the remarkable recent discovery~\cite{jacot2018neural} that tangent kernels (so-called NTK) of wide neural networks  with linear output layer are nearly constant in a ball $\B$ of a certain radius around the ball with a random center.  More precisely, it can be shown that the norm of the Hessian tensor $\|\rmH_\F(\rvw)\| = O^*(1/\sqrt{m})$, where $m$ is the width of the neural network and $\rvw \in \B$~\cite{liu2020linearity}.

Furthermore (see Section~\ref{subsec:uc_ball}): 
$$
|\lambda_{min}(K(\rvw)) - \lambda_{min}(K(\rvw_0))| <O\left(\sup_{\rvw \in \B} \|\rmH_\F(\rvw)\|\right) = O(1/\sqrt{m}).
$$
Bounding the kernel eigenvalue at the initialization point $\lambda_{min}(K(\rvw_0))$ from below (using the results from~\cite{du2018gradientshallow,du2018gradientdeep}) completes the analysis. 

To prove the result for general neural networks (note that the Hessian norm is typically large for such networks~\cite{liu2020linearity}) we observe that they can be decomposed as a composition of a network with a linear output layer and a coordinate-wise non-linear transformation of the output. The \pls condition is preserved under well-behaved non-linear transformations which yields the result. 

%Finally, we can remove the condition of the last layer linearity by observing that the PL$^*$ condition is preserved under well-behaved non-linear transformations. \chaoyue{Note that removing the last layer linearity generally leads to a non-constant tangent kernel~\cite{liu2020linearity}}. See Section~\ref{sec:wide_nn_pl*} for detailed discussion.

% A system of equations tends to be approximately linear, if its Hessian tensor has small spectral norm. We will see that 
%       such systems have approximately constant tangent kernel. Therefore, uniform conditioning of this kind of systems is usually expected.
%       although the constraint on the Hessian tensor appears to be restrictive at first sight, such systems in fact are not rare: it is a mathematical freebie that
 %      a lot of machine learning models  ``transitions to linearity'' when the model complexity is increased to infinity. Typical examples include wide neural networks that have a linear output layer,  which have been vastly investigated in the literature, such as ~\cite{jacot2018neural,du2018gradientshallow,du2018gradientdeep,lee2019wide}. Such networks are proven to have constant tangent kernel~\cite{jacot2018neural,du2018gradientshallow}, are shown to have arbitrarily small Hessian spectral norm and ``transition to linearity''~\cite{liu2020linearity}. As we also show, linear combinations of infinitely many, independent and random sub-models of the same type are also such systems.
 
 %For large 
 
   \begin{wrapfigure}[17]{r}{6cm}
%   [20]{r}{0.4\textwidth}
  \vspace{-6pt}
\begin{center}
  \includegraphics[width=0.4\textwidth]{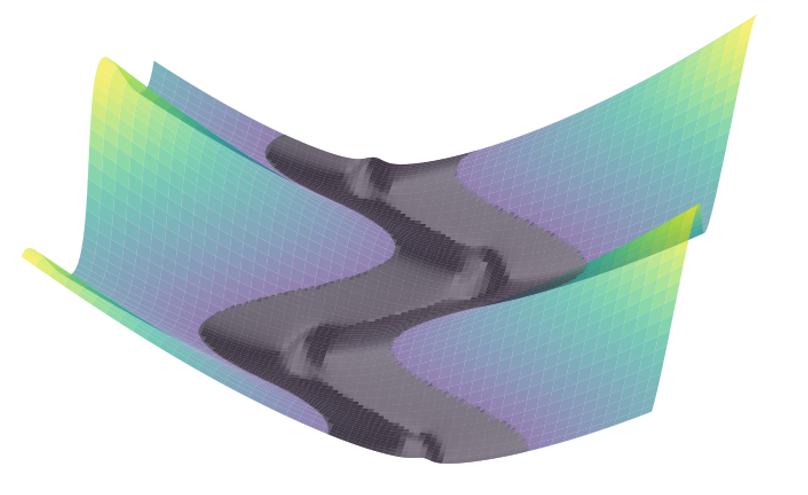} 
\end{center}
\vspace{-15pt}
\caption{The loss landscape of  ``almost over-parameterized'' systems. The landscape looks  over-parameterized  except for the grey area where the loss is small. Local minima of the loss are contained there.}
\label{fig:pl-epsilon-intro}
\vspace{-36pt} 
\end{wrapfigure}

 \paragraph{Relaxing the PL$^*$ condition: when the systems are almost over-parameterized.}

 Finally, a framework for understanding large systems cannot be complete without addressing the question of transition between under-parameterized and over-parameterized systems. While neural network models such as those in~\cite{brown2020language,fedus2021switch} have billions or trillions on parameters, they are often trained on very large datasets and thus have comparable number of constraints. Thus in the process of optimization they may initially appear over-parameterized, while no truly interpolating solution may exist. Such an optimization landscape is shown in Fig.~\ref{fig:pl-epsilon-intro}.   
 While an in-depth analysis of this question is beyond the scope of this work, we propose a version of the PL$^*$ condition that can account for such behavior by postulating that the PL$^*$ condition is satisfied for values of $\l(\rvw)$ above a certain threshold and that the optimization path from the initialization to the early stopping point lies in the PL$^*$ domain. 
 Approximate convergence of (S)GD can be shown under this condition, see Section~\ref{subsec:pl-epsilon}.

\subsection{Related work}
Loss landscapes of over-parameterized machine learning models, especially deep neural networks, have  recently been studied in a number of papers including~\cite{cooper2018loss, li2018over,nguyen2018loss,yu1995local,lederer2020no,mei2018mean}. The work~\cite{cooper2018loss} suggests that the solutions of an over-parameterized system typically are not discrete and form a lower dimensional manifold in the parameter space. In particular~\cite{li2018over,nguyen2018loss}
show that sufficiently over-parameterized neural networks have no strict local minima under certain mild assumptions. Furthermore in~\cite{lederer2020no} it is proved that, for sufficiently over-parameterized neural networks, each local minimum (if it exists) is ``path connected'' to a global minimum where the loss value on the path is non-increasing. While the above works mostly focus on the properties of the minima of the loss landscape, in this paper we  on the landscape within neighborhoods of these minima, which can often cover the whole optimization path of gradient-based methods. 
%The view of the landscape in the neighborhoods, instead of just the minima, allows us to widen our understanding the landscape, and helps us to analyze the whole optimization procedure. 

There has also been a rich literature that particularly focuses on the optimization of wide neural networks using gradient descent~\cite{soltanolkotabi2018theoretical, du2018gradientshallow,du2018gradientdeep,lee2019wide,bartlett2019gradient,arora2019fine,ji2019polylogarithmic,oymak2019towards}, and  SGD~\cite{allen2019convergence,zou2018stochastic},   especially after the discovery of the constancy of neural tangent kernels (NTK) for certain neural networks~\cite{jacot2018neural}. Many of these analyses are based on the observation that the constancy of the tangent kernel implies that the training dynamic of these networks is approximately that of a linear model~\cite{lee2019wide}.  This type of analysis can be viewed within our framework of Hessian norm control.
%Hessian norm of the system, and does not apply to the neural networks that  have non-linear output layer, which have non-constant tangent kernels. 
%Moreover, we identified the key loss landscape structures, i.e., nowhere locally convex and satisfaction of PL$^*$ condition, for these networks.

The Polyak-{\L}ojasiewicz (PL) condition has attracted interest in connection with  convergence of neural networks and other non-linear systems as it allows to prove some key properties of convexity with respect to the optimization in non-convex settings. 
For example, the work~\cite{charles2018stability} proved that a composition of strongly convex function with a one-layer leaky ReLU network satisfies the PL condition.
%given the notion of non-convexity of the loss for the modern machine learning models, especially deep neural networks. As a condition that allows non-convexity but also keeps some key mathematical properties of strong convexity, the PL condition is often set as the starting point of optimization theories. 
The paper~\cite{bassily2018exponential} 
proved fast convergence of stochastic gradient descent with constant step size for over-parameterized models, 
 while~\cite{oymak2019overparameterized}
refined this result showing SGD convergence within a ball with a certain radius. In related work~\cite{gupta2019path} shows that the optimization path length can be upper bounded. 

Finally, it is interesting to point out the connections with the contraction theory for differential equations explored in~\cite{wensing2020beyond}.

\section{Notation and standard definitions}\label{sec:prelim}

We use bold lowercase letters, e.g., $\rvv$, to denote vectors,  capital letters, e.g., $W$, to denote matrices, and bold capital letters, e.g., $\rmW$, to denote tuples of matrices or higher order tensors.
We denote the set $\{1,2,\cdots, n\}$ as $[n]$.  Given a function $\sigma(\cdot)$, we use $\sigma'(\cdot)$ and $\sigma''(\cdot)$ to denote its first and second respectively. For vectors, we use $\|\cdot \|$ to denote the Euclidean norm and $\|\cdot\|_{\infty}$ for the $\infty$-norm. For matrices, we use $\|\cdot\|_F$ to denote the Frobenius norm and $\|\cdot\|_2$ to denote the spectral norm (i.e., 2-norm).
We use $D \F$ to represent the differential map of $\F:\R^m\to\R^n$.  Note that $D \F$ is represented as
 a $n\times m$ matrix, with $(D \F)_{ij} := \frac{\partial \F_i}{\partial w_j}$. 
We denote Hessian of the function $\F$ as $\rmH_\F$, which is an $n \times m \times m$ tensor with $(\rmH_\F)_{ijk} = \frac{\partial^2 \F_i}{\partial w_j\partial w_k}$, and define the norm of the Hessian tensor to be the maximum of the spectral norms of each of its Hessian components, i.e., $\|\rmH_\F\| = \max_{i\in [n]} \|H_{\F_i}\|_2$, where $H_{\F_i}=\partial^2 \F_i/\partial \rvw^2$. When necessary, we also assume that $\rmH_\F$ is continuous. We also denote the Hessian matrix of the loss function as $H_{\mathcal{L}} := \partial^2 \mathcal{L}/\partial \rvw^2$, which is an $m\times m$ matrix. We denote the smallest eigenvalue of a matrix $K$ as $\lambda_{min}(K)$.

% \smallskip
% \smallskip
% Below, we give definitions of a few important concepts used in this paper. 
% First, following~\cite{jacot2018neural} (where it is defined specifically for neural networks) we give the following definition of tangent kernel:
% \begin{defi}[Tangent kernel] ????? remove ?????
% Given a function $\F: \mathbb{R}^m \to \mathbb{R}^n$, we define its {\it tangent kernel} matrix $K$ at a given point $\rvw$ in the parameter space as: 
% $$
% K(\rvw) := D \F(\rvw)D \F(\rvw)^T.
% $$ 
% \end{defi} 
%This definition follows the Neural Tangent Kernel~\cite{} defined specifically for neural networks. 
%The recently popular Neural Tangent Kernel~\cite{} is an instance  of the Tangent Kernel, realized on neural network models.
% Note that at any given point $\rvw$, the tangent kernel $K(\rvw)$ is a $n\times n$ symmetric matrix and is positive semi-definite.

In this paper, we consider the problem of solving a system of equations of the form Eq.(\ref{eq:problem}), i.e. finding $\rvw$, such that $\F(\rvw)=\rvy$. 
This problem is  solved by minimizing a loss function $\l(\F(\rvw),\rvy)$, such as the square loss $\l(\rvw) =\frac{1}{2}\|\F(\rvw)-\rvy\|^2$,  with gradient-based algorithms. Specifically, 
the gradient descent method starts from the initialization point $\rvw_0$ and updates the parameters as follows:
\begin{equation}
\rvw_{t+1} = \rvw_{t}-\eta \nabla \l(\rvw_t),~~~ \forall t \in \mathbb{N}.
\end{equation} 
We call the set $\{\rvw_t\}_{t=0}^{\infty}$ the optimization path, and put  $\rvw_{\infty}=\lim_{t\to \infty}\rvw_t$ (assuming the limit exists).

\smallskip
\smallskip
Throughout this paper, we assume the map $\F$ is 
 Lipschitz continuous and smooth. See the definition below:
\begin{defi}[Lipschitz continuity]
Map $\F: \mathbb{R}^m \to \mathbb{R}^n$ is $L_\F$-Lipschitz continuous, if \begin{equation}\|\F(\rvw)-\F(\rvv)\| \le L_\F\|\rvw-\rvv\|, \quad \forall \rvw, \rvv \in \mathbb{R}^m.\end{equation}
\end{defi} 
\begin{remark}
In supervised learning cases, $(\F(\rvw))_i = f(\rvw;\rvx_i)$. If we denote 
$L_f$ as the Lipschitz continuity of the machine learning model $f(\rvw;\rvx)$ w.r.t. the parameters $\rvw$, then $\|\F(\rvw)-\F(\rvv)\|^2 = \sum_i (f(\rvw;\rvx_i) - f(\rvv;\rvx_i))^2 \le n L_f^2\|\rvw-\rvv\|^2.$
One has $L_\F \le \sqrt{n}L_f$.
\end{remark}
A direct consequence of the $L_\F$-Lipschitz condition is that $\|D \F(\rvw)\|_2 \le L_\F$ for all $\rvw\in\mathbb{R}^m$. Furthermore, it is easy to see that the tangent kernel has bounded spectral norm:
\begin{prop}\label{prop:lf}
If map $\F$ is $L_\F$-Lipschitz, then $\|K(\rvw)\|_2 \le L_\F^2$. 
\end{prop}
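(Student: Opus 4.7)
The plan is to derive the bound directly from the definition $K(\rvw) = \nabla F(\rvw)\nabla F(\rvw)^{T}$ together with the elementary consequence of Lipschitz continuity already noted immediately before the proposition, namely that the Jacobian has spectral norm at most $L_F$.

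First I would recall why $L_F$-Lipschitz continuity of $F$ yields $\|\nabla F(\rvw)\|_2 \le L_F$ at every $\rvw$: for any unit vector $\rvu\in\R^m$, the directional derivative $\nabla F(\rvw)\rvu$ is the limit of $(F(\rvw+t\rvu)-F(\rvw))/t$ as $t\to 0$, and the Lipschitz bound forces the Euclidean norm of this limit to be at most $L_F$; taking the supremum over $\rvu$ gives $\|\nabla F(\rvw)\|_2 \le L_F$.

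Then I would apply two standard facts about the spectral norm: submultiplicativity, $\|AB\|_2 \le \|A\|_2 \|B\|_2$, and invariance under transposition, $\|A^{T}\|_2 = \|A\|_2$. Applied with $A = \nabla F(\rvw)$ and $B = \nabla F(\rvw)^{T}$, these give
\begin{equation*}
\|K(\rvw)\|_2 = \|\nabla F(\rvw)\nabla F(\rvw)^{T}\|_2 \le \|\nabla F(\rvw)\|_2 \cdot \|\nabla F(\rvw)^{T}\|_2 = \|\nabla F(\rvw)\|_2^{2} \le L_F^{2},
\end{equation*}
which is the claim. There is no real obstacle here: the statement is a one-line consequence of two textbook matrix-norm identities and the preceding remark about $\|\nabla F\|_2$; the only thing worth being explicit about is that the equality $\|K(\rvw)\|_2 = \|\nabla F(\rvw)\|_2^{2}$ in fact holds (since $K$ is positive semidefinite and shares its nonzero singular values with $\nabla F\nabla F^{T}$), so the bound is actually tight.
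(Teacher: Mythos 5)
Your argument is correct and is exactly the route the paper intends: the remark preceding the proposition already records $\|\nabla F(\rvw)\|_2 \le L_F$, and the bound $\|K(\rvw)\|_2 = \|\nabla F(\rvw)\nabla F(\rvw)^T\|_2 \le \|\nabla F(\rvw)\|_2^2 \le L_F^2$ is the one-line consequence the paper leaves implicit. Your added observation that $\|K(\rvw)\|_2$ actually equals $\|\nabla F(\rvw)\|_2^2$ is a correct (and tight) refinement, though not needed for the statement.
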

% This condition is satisfied by the two-layer neural networks defined in Eq.(), as well as multi-layer neural networks (see the discussion in Section~\ref{}).
\begin{defi}[Smoothness]
Map $\F: \mathbb{R}^m \to \mathbb{R}^n$ is $\beta_\F$-smooth, if 
\begin{equation}
    \|\F(\rvw) - \F(\rvv) -D \F(\rvv)(\rvw-\rvv)\|_2 \le \frac{\beta_\F}{2}\|\rvw-\rvv\|^2, \quad \forall \rvw, \rvv \in \mathbb{R}^m.
\end{equation}
\end{defi}

\paragraph{Jacobian matrix.} Given a map $\Phi: \rvz\in\mathbb{R}^n \mapsto \Phi(\rvz)\in \mathbb{R}^n$, the corresponding Jacobian matrix $J_{\Phi}$ is defined as following: each entry
\begin{equation}
    (J_{\Phi})_{ij} := \frac{\partial \Phi_i}{\partial z_j}.
\end{equation}

\section{Essential non-convexity of loss landscapes of over-parameterized non-linear systems}
\label{sec:nonconvex}

In this section we discuss  the observation that  over-parameterized systems give rise to landscapes that are {\it essentially} non-convex -- there 
typically exists no neighborhood around any global minimizer where the loss landscape is convex. This is in contrast to under-parameterized systems where 
such a neighborhood typically exists, although it can be small. 

For over-parameterized system of equations, the number of parameters $m$ is larger than the number of constraints $n$. In this case, the system of equations generically has exact solutions, which form a {\it continuous} manifold (or several {\it continuous} manifolds), typically of dimension $m-n>0$ so that  none of the solutions are isolated. A specific result for wide neural networks, showing non-existence of isolated global minima of the loss function is given in Proposition~\ref{prop:no-iso} (Appendix~\ref{secapp:no_isolated}).
%proposition (Please see a formal statement of the proposition and detailed discussion in Appendix~\ref{secapp:no_isolated}):
%\begin{prop}[Informal]
%For a feedforward neural network with sufficient width, given a global minimum $\rvw^*$, there are always other global minima in any neighborhood of $\rvw^*$.
%\end{prop}

It is important to note that such continuous manifolds of solutions generically have non-zero curvature\footnote{Both extrinsic,  the second fundamental form,  and intrinsic Gaussian curvature.}, due to the non-linear nature of the system of equations. This results in the lack of local convexity of the loss landscape, i.e., loss landscape is nonconvex in any neighborhood of a solution (i.e, a global minimizer of $\l$). This can be seen from the following argument. 
Suppose that the loss function landscape of $\l$ is convex within  a ball $\B$ that intersects the set of global minimizers $\mathcal M$. The  minimizers of a convex function within a convex domain form a convex set, thus $\mathcal{M} \cap \mathcal S$ must also be convex. Hence $\mathcal{M} \cap \mathcal S$ must be a convex subset within a lower dimensional linear subspace of $\R^m$ and cannot have curvature (either extrinsic ot intrinsic). This geometric intuition is illustrated in Fig.~\ref{fig:glob}, where the set of global minimizers is of dimension one.

%In that case  

%$\mathcal{S}$, then the set of global minimizers $\mathcal M$ within $\mathcal{S}$, $\mathcal{M} \cap \mathcal S$ must also be convex. 

%As we noted above, these sets have non-zero curvature and are nonconvex. Hence, the loss landscape cannot be convex in $\mathcal{S}$.
%This geometric intuition is illustrated in Fig.~\ref{fig:loc&glob}.

To provide an alternative analytical intuition (leading to a precise argument)  
 consider an over-parameterized system $\F(\rvw):\mathbb{R}^m \rightarrow \mathbb{R}^n$, where $m > n$, with the square loss function $\l(F(\rvw),\rvy) = \frac{1}{2}\| \F(\rvw)-\rvy\|^2$.  The Hessian matrix of the loss function takes the form
\begin{align}\label{eq:hessianofloss} 
H_\l(\rvw) = \underbrace{D \F(\rvw)^T \frac{\partial^2\l}{\partial \F^2}(\rvw)D \F(\rvw)}_{A(\rvw)} +\underbrace{\sum_{i=1}^n\left(\F(\rvw)-\rvy\right)_i H_{\F_i}(\rvw)}_{B(\rvw)} , 
\end{align}
where $H_{\F_i}(\rvw) \in \mathbb{R}^{m\times m}$ is the Hessian matrix of $i$th output of $\F$ with respect to $\rvw$.

Let $\rvw^*$ be a solution to Eq.(\ref{eq:problem}). Since  $\rvw^*$ is a global minimizer of $\l$,  $B(\rvw^*)=0$.
We note that  $A(\rvw^*)$ is a positive semi-definite matrix of rank at most $n$ with at least $m-n$ zero eigenvalues.

Yet, in a neighbourhood of $\rvw^*$ there typically are points where $B(\rvw)$ has rank $m$. As we show below, this observation,  together with a mild technical assumption on the loss, implies 
that $H_\l(\rvw)$ cannot be positive semi-definite in any ball around $\rvw^*$ and hence is not locally convex. 
To see why this is the case, consider an example of a system with only one equation ($n=1)$. The loss function  takes the form as $\l(\rvw) = \frac{1}{2}( \F(\rvw) - y)^2, y\in \R$ and the Hessian of the loss function can be written  as 
\begin{align}
    H_\l(\rvw) = D \F(\rvw)^T D \F(\rvw) + \sum_{i}((\F(\rvw))_i - y_i) H_{\F_i}(\rvw).
\end{align}
Let $\rvw^*$ be a solution, $\F(\rvw^*) = y$ and suppose that $D\F(\rvw^*)$ does not vanish.
%Then there exists   an arbitrarily small vector $\rvv$, such that $F(\rvw^* + \rvv) - y>0$ and $F(\rvw^* - \rvv) - y<0$. 
{ In the neighborhood of $\rvw^*$, there exist arbitrarily close points $\rvw^* + \boldsymbol{\delta}$ and $\rvw^* - \boldsymbol{\delta}$, such that $\F(\rvw^* + \boldsymbol{\delta}) - y >0$ and $\F(\rvw^* - \boldsymbol{\delta}) - y <0$}. 
Assuming that the rank of $H_{\F_i}(\rvw^*)$ is greater than one, and 
noting that the rank  of $D \F(\rvw)^T D \F(\rvw)$ is at most one, it is easy to see that either $H_\l(\rvw^*+\boldsymbol{\delta})$ or $H_\l(\rvw^*-\boldsymbol{\delta})$ must have negative eigenvalues, which rules out local convexity at $\rvw^*$. \\
A more general version of this argument is given in the following:
\begin{prop}[Local non-convexity]\label{prop:nolocalconx} Let  $\l(\rvw^*) = 0$ and,
furthermore, 
% assume  that $\frac{d}{d\rvw}\left(\frac{\partial\l}{\partial F}(\rvw^*)\right) \ne 0$
assume that $D\F(\rvw^*) \ne 0$
and  $\mathrm{rank}(H_{\F_i}(\rvw^*))>2n$ for at least one $i\in [n]$. Then $\l(\rvw)$ is not convex in any neighborhood of $\rvw^*$.
\end{prop}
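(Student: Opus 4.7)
The plan is to Taylor-expand the Hessian $H_\L(\rvw) = A(\rvw) + B(\rvw)$ from equation~(5) around $\rvw^*$, identify a perturbation direction $\boldsymbol{\delta}$ along which the first-order change in $B$ produces a direction of negative curvature inside the null space of $A(\rvw^*)$, and conclude that $H_\L$ fails to be positive semi-definite arbitrarily close to $\rvw^*$.

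Setup. Since $\rvw^*$ is a global minimizer with $\L(\rvw^*)=0$, the gradient factor $\frac{\partial\L}{\partial F}(\rvw^*)$ vanishes (for standard losses that are convex in $F$ with minimum at $F = \rvy$), so $B(\rvw^*)=0$ and $H_\L(\rvw^*)=A(\rvw^*)\succeq 0$ with rank $\leq n$. A first-order expansion yields
\begin{equation*}
H_\L(\rvw^* \pm \epsilon\boldsymbol{\delta}) \;=\; A(\rvw^* \pm \epsilon\boldsymbol{\delta}) \;\pm\; \epsilon\, M(\boldsymbol{\delta}) \;+\; O(\epsilon^2), \qquad M(\boldsymbol{\delta}) := \sum_{i=1}^n (J\boldsymbol{\delta})_i\, H_i(\rvw^*),
\end{equation*}
where $J := \frac{d}{d\rvw}\bigl(\frac{\partial\L}{\partial F}\bigr)(\rvw^*) \neq 0$. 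For any $v \in \mathrm{null}(A(\rvw^*))$, the scalar function $g(\rvw) := v^T A(\rvw) v$ is non-negative and attains its global minimum $0$ at $\rvw^*$, so $\nabla g(\rvw^*) = 0$ and $v^T A(\rvw^* \pm \epsilon\boldsymbol{\delta}) v = O(\epsilon^2)$. Hence $v^T H_\L(\rvw^* \pm \epsilon\boldsymbol{\delta}) v = \pm \epsilon\, v^T M(\boldsymbol{\delta}) v + O(\epsilon^2)$, and as soon as some such $v$ has $v^T M(\boldsymbol{\delta}) v \neq 0$, one of the two signs yields $v^T H_\L v < 0$ for all sufficiently small $\epsilon$, ruling out convexity in every neighborhood of $\rvw^*$.

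Key lemma and its application. If a symmetric matrix $N$ satisfies $v^T N v = 0$ on a subspace $U$, then by polarization $u^T N v = 0$ for all $u,v \in U$, so $NU \subseteq U^\perp$ and therefore $\mathrm{rank}(N) \leq 2(m - \dim U)$. Taking $U = \mathrm{null}(A(\rvw^*))$ of dimension $\geq m-n$ yields the desired implication: any $N$ with $\mathrm{rank}(N) > 2n$ admits some $v \in \mathrm{null}(A(\rvw^*))$ with $v^T N v \neq 0$. It thus suffices to produce $\boldsymbol{\delta}$ with $\mathrm{rank}(M(\boldsymbol{\delta})) > 2n$. The locus of coefficient vectors $c \in \mathbb{R}^n$ for which $\sum_i c_i H_i(\rvw^*)$ has rank $\leq 2n$ is a proper algebraic subvariety of $\mathbb{R}^n$ (cut out by the $(2n+1)\times(2n+1)$ minors), proper because $c = e_{i_0}$ lies outside it by the hypothesis on $H_{i_0}$. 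Since $\mathrm{im}(J) \neq \{0\}$, a generic $\boldsymbol{\delta}$ makes $J\boldsymbol{\delta}$ avoid this subvariety, and the lemma then supplies the required $v$.

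The main obstacle is this last step: a priori $\mathrm{im}(J)$ could lie entirely inside the exceptional subvariety (for instance, a one-dimensional image pointing in a ``bad'' direction), in which case one needs to strengthen the hypothesis---for example, by demanding that $J$ has full row rank $n$, or merely that $e_{i_0} \in \mathrm{im}(J)$. In the motivating setting of square loss with over-parameterized $F$, $J = \nabla F(\rvw^*)$ generically has full row rank $n$, and one picks $\boldsymbol{\delta}$ with $J\boldsymbol{\delta} = e_{i_0}$, giving $M(\boldsymbol{\delta}) = H_{i_0}(\rvw^*)$ of rank $> 2n$ directly, so the obstacle dissolves and the argument closes.
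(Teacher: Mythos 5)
Your proposal follows essentially the same route as the paper's own proof: the same decomposition $H_\L=A+B$ from Eq.~(\ref{eq:hessianofloss}), the same first-order perturbation at $\rvw^*\pm\epsilon\boldsymbol{\delta}$ with a sign flip, and the same use of the rank hypothesis to exhibit negative curvature. In fact your handling of the rank condition is tighter than the paper's: by the polarization lemma applied on $U=\mathrm{null}(A(\rvw^*))$ (codimension at most $n$), the threshold $\mathrm{rank}(H_{i_0}(\rvw^*))>2n$ is exactly what is needed, whereas the paper works in the common null space of $A(\rvw^*+\boldsymbol{\delta})$ and $A(\rvw^*-\boldsymbol{\delta})$ (codimension up to $2n$), where the same polarization argument would require rank $>4n$, and its dimension count only yields $H_{i_0}\rvv\ne 0$ rather than $\rvv^TH_{i_0}\rvv\ne 0$. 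Two minor technical remarks: your claim $v^TA(\rvw^*\pm\epsilon\boldsymbol{\delta})v=O(\epsilon^2)$ either needs convexity of $\L$ in $F$ near $\rvw^*$ (as you note) or can be obtained assumption-free by taking $v\in\mathrm{null}(\nabla F(\rvw^*))$, so that $\nabla F(\rvw^*\pm\epsilon\boldsymbol{\delta})v=O(\epsilon)$; and $o(\epsilon)$ already suffices for the sign-flip conclusion.

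The obstacle you flag at the end is genuine, and the paper does not resolve it either: after choosing $\rvv$ with $\rvv^T\rmH(\rvw^*\pm\boldsymbol{\delta})\rvv\ne 0$, the paper splits into cases according to the sign of $\bigl(J\boldsymbol{\delta}\bigr)^T\bigl(\rvv^T\rmH\rvv\bigr)$ and dismisses the zero case with ``adjust $\boldsymbol{\delta}$ a little''; but if the vector $\rvv^T\rmH(\rvw^*)\rvv$ is orthogonal to $\mathrm{im}(J)$ (possible whenever $J$ does not have full row rank), no small adjustment makes the first-order term nonzero. Your diagnosis that the stated hypotheses are too weak is correct: with square loss, $n=2$, $F_1(\rvw)=a^T(\rvw-\rvw^*)+y_1$ linear with $a\ne 0$, and $F_2(\rvw)=y_2+(\rvw-\rvw^*)^TQ(\rvw-\rvw^*)$ with $Q\succeq 0$ of rank $>2n$, the loss $\frac12\bigl(a^T(\rvw-\rvw^*)\bigr)^2+\frac12\bigl((\rvw-\rvw^*)^TQ(\rvw-\rvw^*)\bigr)^2$ is globally convex while satisfying $\L(\rvw^*)=0$, $\frac{d}{d\rvw}\bigl(\frac{\partial\L}{\partial F}(\rvw^*)\bigr)\ne 0$ and $\mathrm{rank}(H_2(\rvw^*))>2n$. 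So the strengthening you propose (e.g.\ $e_{i_0}\in\mathrm{im}(J)$, which is automatic when $n=1$ or when $J=\nabla F(\rvw^*)$ has full row rank, the generic over-parameterized situation the paper has in mind) is precisely what is needed, and under it your argument is complete and, if anything, more careful than the paper's.
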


\begin{remark}
% Notice that for the square loss $\l(\rvw) = \frac{1}{2}\| F(\rvw)-\rvy \|^2$, the assumption in Proposition~\ref{prop:nolocalconx} $\frac{d}{d\rvw}\left(\frac{\partial\l}{\partial F}(\rvw^*)\right) \ne 0$ reduces simply to $D F(\rvw^*) \ne 0$. In general we do not expect $D F$ to vanish at $\rvw^*$ (which would indicate a critical point).
 Note that in general we do not expect $D \F$ to vanish at $\rvw^*$ as there is no reason why a solution to Eq.(\ref{eq:problem}) should be a critical point of $\F$. For a general loss $\l(\rvw)$,  the assumption in Proposition~\ref{prop:nolocalconx} that $D \F(\rvw^*) \ne 0$ is replaced by $\left(\frac{d}{d\rvw}\frac{\partial\l}{\partial \F}\right)(\rvw^*) \ne 0$. 
\end{remark}
\noindent
A full proof of Prop.~\ref{prop:nolocalconx} for a general loss function can be found in Appendix~\ref{prf:nolocalconx}.

\paragraph{Comparison to under-parameterized systems.}   For under-parameterized systems, local minima are generally isolated, as illustrated in Figure~\ref{fig:loc}. Since $H_\l(\rvw^*)$ is generically positive definite when $\rvw^*$ is an isolated local minimizer, by the continuity of $H_\l(\cdot)$, positive definiteness holds in the neighborhood of $\rvw^*$. Therefore, $\l(\rvw)$ is locally convex around $\rvw^*$. \\

\section{Over-parameterized non-linear systems are  PL$^*$ on most of the parameter space}
\label{sec:gen_theory}
In this section we argue that loss landscapes of  over-parameterized systems satisfy the PL$^*$ condition across most of their parameter space. 

We begin with a sufficient analytical condition {\it uniform conditioning} of a system closely related to PL$^*$.

%We start by making a connection between PL$^*$ condition and {\it uniform conditioning}, i.e. a uniform lower bound on the eigenvalues of the tangent kernel. 

 %In this section, we show that the square loss of many large non-linear systems, including wide neural networks, satisfy the PL$^*$ condition  in local regions $\mathcal{S}$, through the lens of uniform conditioning of the related tangent kernels of the systems. Later, we shall see that the local region $\mathcal{S}$ is typically required to be a ball $B(\rvw_0,R):=\{\rvw : \|\rvw-\rvw_0\|\le R\}$ with a finite radius $R$ around the algorithm initial point $\rvw_0$, which shall cover the optimization path of gradient descent.  

\begin{defi}[Uniform conditioning]
We say that  $\F(\rvw)$ is $\mu$-uniformly conditioned ($\mu>0$) on $\mathcal{S}\subset \R^m$
 if the smallest eigenvalue of its tangent kernel $K(\rvw)$ satisfies  
\begin{equation}\label{eq:lmin}
    \lambda_{min}(K(\rvw)) \ge \mu,\ \forall \rvw\in \mathcal{S}.
\end{equation}
\end{defi}
\noindent The following important connection shows that uniform conditioning of the system is sufficient for the corresponding square loss landscape to satisfy the PL$^*$ condition.
\begin{thm}
[Uniform conditioning  $\Rightarrow$ PL$^*$ condition]\label{thm:wellconditionandpl}
If $\F(\rvw)$ is $\mu$-uniformly conditioned,  on a set $\mathcal{S}\subset \mathbb{R}^m$, then the square loss function $\l(\rvw)=\frac{1}{2}\|\F(\rvw)-\rvy\|^2$ satisfies $\mu$-PL$^*$ condition on $\mathcal{S}$. 
%Particularly, if the set $\mathcal{S}$ covers the gradient descent optimization path $\mathcal{P}_{GD}$, i.e., $\mathcal{P}_{GD}\subset \mathcal{S}$, then the square loss $\l(\rvw)$ is $\mu$-PL$^*$ on $\mathcal{P}_{GD}$.
\end{thm}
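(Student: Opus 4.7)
The plan is to directly verify the $\mathrm{PL}^*$ inequality by computing the gradient of the square loss and recognizing that its squared norm is a quadratic form in the residual $F(\rvw)-\rvy$ with matrix equal to the tangent kernel $K(\rvw)$. The uniform lower bound on $\lambda_{\min}(K(\rvw))$ then immediately turns that quadratic form into a multiple of the loss.

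More concretely, first I would compute $\nabla \L(\rvw)$ for the square loss $\L(\rvw)=\tfrac12\|F(\rvw)-\rvy\|^2$ using the chain rule: since $\nabla F(\rvw)$ is the $n\times m$ Jacobian of $F$, one gets
\begin{equation*}
\nabla \L(\rvw) \;=\; \nabla F(\rvw)^{T}\bigl(F(\rvw)-\rvy\bigr).
\end{equation*}
Then I would expand the squared Euclidean norm and recognize the tangent kernel:
\begin{equation*}
\|\nabla \L(\rvw)\|^{2} \;=\; \bigl(F(\rvw)-\rvy\bigr)^{T}\,\nabla F(\rvw)\,\nabla F(\rvw)^{T}\,\bigl(F(\rvw)-\rvy\bigr) \;=\; \bigl(F(\rvw)-\rvy\bigr)^{T} K(\rvw)\bigl(F(\rvw)-\rvy\bigr).
\end{equation*}

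Next I would apply the assumption: since $F$ is $\mu$-uniformly conditioned on $\mathcal{S}$, the symmetric positive semi-definite matrix $K(\rvw)$ satisfies $\lambda_{\min}(K(\rvw))\ge \mu$ for every $\rvw\in\mathcal{S}$. Hence the Rayleigh quotient bound gives $\rvz^{T}K(\rvw)\rvz \ge \mu\|\rvz\|^{2}$ for any $\rvz\in\R^{n}$, and applying it to $\rvz=F(\rvw)-\rvy$ yields
\begin{equation*}
\|\nabla \L(\rvw)\|^{2} \;\ge\; \mu\,\|F(\rvw)-\rvy\|^{2} \;=\; 2\mu\,\L(\rvw).
\end{equation*}
Dividing by $2$ produces the $\mu$-$\mathrm{PL}^{*}$ inequality $\tfrac12\|\nabla \L(\rvw)\|^{2}\ge \mu\,\L(\rvw)$ for every $\rvw\in\mathcal{S}$.

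There is no genuine obstacle here; the entire argument is a one-line manipulation once the gradient is written in terms of $\nabla F$. The only thing worth being careful about is the use of the correct definition of $K(\rvw)=\nabla F(\rvw)\nabla F(\rvw)^{T}$ (an $n\times n$ matrix acting on the residual), as opposed to the ``dual'' $m\times m$ Gram matrix $\nabla F(\rvw)^{T}\nabla F(\rvw)$, which would not be bounded below by $\mu$ when $m>n$. After that, no smoothness or Lipschitz assumption on $F$ is needed, which is consistent with $\mathrm{PL}^{*}$ being a purely local property.
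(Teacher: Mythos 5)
Your proposal is correct and follows essentially the same route as the paper: write $\|\nabla\L(\rvw)\|^2$ as the quadratic form $(F(\rvw)-\rvy)^T K(\rvw)(F(\rvw)-\rvy)$ and apply the eigenvalue lower bound $\lambda_{\min}(K(\rvw))\ge\mu$. Your remark about using $K(\rvw)=\nabla F(\rvw)\nabla F(\rvw)^T$ rather than the $m\times m$ Gram matrix is a sensible precision but does not change the argument.
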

\begin{proof}
\begin{align*}
    \frac{1}{2}\Vert \nabla \l(\rvw) \Vert^2 =   \frac{1}{2}( \F(\rvw)-\rvy)^TK(\rvw) ( \F(\rvw)-\rvy)
    \geq \frac{1}{2}\lambda_{min}(K(\rvw)) \Vert \F(\rvw)-\rvy\Vert^2 
    = \lambda_{min}(K(\rvw))\l(\rvw) 
    \ge \mu \l(\rvw).
\end{align*}
\end{proof}

We will now provide some intuition for why we expect $\lambda_{min}(K(\rvw))$ to be separated from zero over most but not all of the optimization landscape. 
The key observation is that 
$$
\rk\,K(\rvw) = \rk\, \left(D\F(\rvw)\,D\F^T(\rvw)\right) =\rk\, D \F (\rvw)
$$
Note that kernel $K(\rvw)$ is an $n\times n$ positive semi-definite matrix by definition. Hence the {\it singular set} $\S_{sing}$, where the tangent kernel is degenerate,  can be written as 
$$
\S_{sing} = \{\rvw \in \R^m\,|\, \lambda_{min}(K(\rvw)) = 0\} = \{\rvw\in\R^m\,|\, \rk\, D \F (\rvw) <n\}.
$$
Generically, $\rk\, D \F (\rvw) = \min (m,n)$. Thus for over-parameterized systems, when $m > n$,  we expect $\S_{sing}$ to have  positive codimension and to be a set of measure zero. In contrast, in under-parametrized settings, $m<n$ and the tangent kernel is always degenerate, $\lambda_{min}(K(\rvw)) \equiv 0$ so such systems {\it cannot} be uniformly conditioned according to the definition above.
Furthermore, while Eq.(\ref{eq:lmin}) provides a sufficient condition, it's also in a certain sense necessary: 
\begin{prop} If $\lambda_{min}(K(\rvw_0)) = 0$
then the system $\F(\rvw) = \rvy$ cannot satisfy PL$^*$ condition for all $\rvy$ on any set $\S$ containing $\rvw_0$. 
\end{prop}
\begin{proof}
Since $\lambda_{min}(K(\rvw_0)) = 0$, the matrix 
$K(\rvw_0)$ has a non-trivial null-space. Therefore we can choose $\rvy$ so that  $K(\rvw_0)(\F(\rvw_0)-\rvy)=0$ and $\F(\rvw_0)-\rvy\ne 0$.  We have 
$$
\frac{1}{2}\Vert \nabla \l(\rvw_0) \Vert^2 =   \frac{1}{2}( \F(\rvw_0)-\rvy)^TK(\rvw_0) ( \F(\rvw_0)-\rvy)=0
$$
and hence the PL$^*$ condition cannot be satisfied at $\rvw_0$.
\end{proof}
Thus we see that only over-parameterized systems can be PL$^*$, if we want that condition to hold for any label vector $\rvy$.

By parameter counting, it is easy to see that the codimension of the singular set $\S_{sing}$ is expected to be $m-n+1$. Thus, on a compact set, for $\mu$ sufficiently small, points which are not $\mu$-PL$^*$ will be found around $\S_{sing}$, a low-dimensional subset of $\R^m$. 
This is represented graphically in Fig.~\ref{fig:pldomain}. 
Note that the more over-parameterization we have, the larger the codimension is expected to be.

To see a particularly simple example of this phenomenon, consider the case when $D\F(\rvw)$ is a random matrix with Gaussian entries\footnote{In this the matrix ``$D\F(\rvw)$'' does not have to correspond to an actual map $\F$, rather the intention is to illustrate how over-parameterization leads to better conditioning.}. Denote by 
$$
\kappa=\frac{\lambda_{max}(K(\rvw))} {\lambda_{min}(K(\rvw))}
$$ the condition number of $K(\rvw)$. Note that by definition $\kappa \ge 1$. 
It is shown in~\cite{chen2005condition} that (assuming $m>n$)
$$
\mathbb{E}(\log \kappa) < 2 \log \frac{m}{m-n+1} +5.
$$
We see that as the amount of over-parameterization increases $\kappa$ converges in expectation (and also can be shown with high probability)  to a small constant.
While this example is rather special, it is representative of the concept that over-parameterization helps with conditioning. In particular, as we discuss below in Section~\ref{sec:wide_nn_pl*}, wide neural networks satisfy the PL$^*$ with high probability within a random ball of a constant radius.

     \begin{figure}[t]
        \centering
        \includegraphics[width=0.6\linewidth]{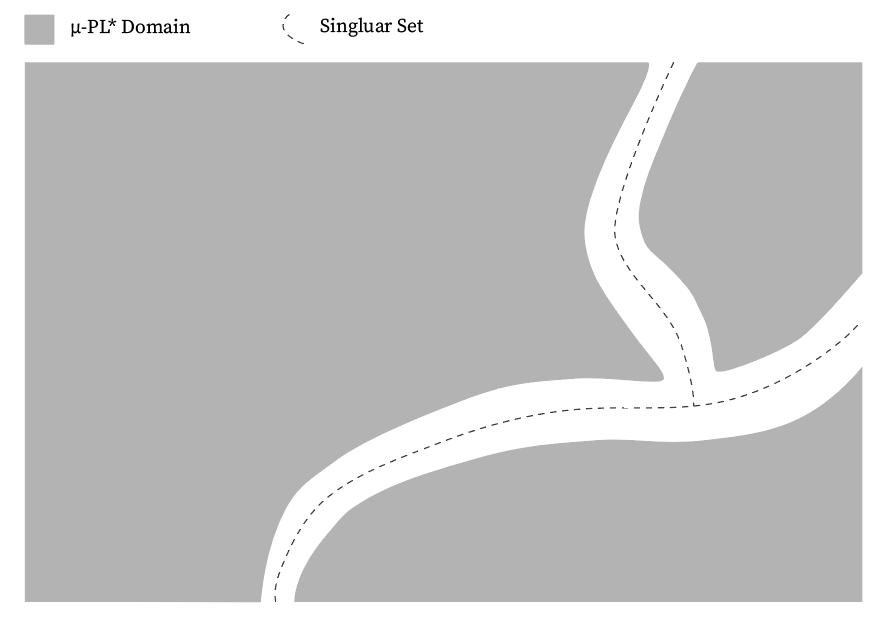} 
        \caption{$\mu$-PL$^*$ domain and the singular set. We expect points away from the singular set to satisfy $\mu$-PL$^*$ condition for sufficiently small $\mu$. } \label{fig:pldomain}
  \end{figure}

We will now provide  techniques for proving that the PL$^*$ condition holds for specific systems and later in Section~\ref{sec:wide_nn_pl*} we will show how these techniques apply to wide neural networks,  demonstrating that they are PL$^*$. In Section~\ref{sec:pl-conv} we discuss the implications of the PL$^*$ condition for the existence of solutions and convergence of (S)GD, in particular for deep neural networks.

%In the rest of this section, we  provide several principles of constructing uniformly conditioned systems, or machine learning models. We will see in later sections that these scenarios include  neural networks that have sufficient width as special examples.  
\subsection{Techniques for establishing the PL$^*$ condition} \label{subsec:uc_ball}

While we expect typical over-parameterized systems to satisfy the PL$^*$ condition at most points, directly analyzing the smallest eigenvalue of the corresponding kernel matrix is often difficult. Below we describe two methods for demonstrating the PL$^*$ condition holds in a ball of a certain radius. First method is based on the  observation that is well-conditioned in a ball provided it is well-conditioned at its center and the Hessian norm is not large compared to the radius of the ball. 
Interestingly,   this ``Hessian control'' condition holds for a broad class of non-linear systems. In particular, as discussed in Sec~\ref{sec:wide_nn_pl*}, wide neural networks with linear output layer have small Hessian norm. For some intuition on why this appears to be a feature of many large systems see Appendix~\ref{appsec:small_hessian}.

%Indeed, having small Hessian seems to be a feature of many large systems, see Appendix~\ref{appsec:small_hessian} for a simple example.

The second approach to demonstrating conditioning is by noticing that it is preserved under well-behaved transformations of the input or output or when composing certain models. 

Combining these methods yields a general result on PL$^*$ condition for wide neural networks in Section~\ref{sec:wide_nn_pl*}.

\paragraph{Uniform conditioning through Hessian spectral norm.}
We will now show that controlling the norm of the Hessian tensor for the map $\F$ leads to well-conditioned systems. 
The idea of the analysis is that  the change of the tangent kernel of $\F(\rvw)$ can be bounded in terms of the norm of the Hessian of $\F(\rvw)$. Intuitively, this is 
analogous to the mean value theorem, bounding the  first derivative of $\F$ by its second derivative.
If the Hessian norm is sufficiently small, the change of the tangent kernel and hence its conditioning can be controlled in a ball $B(\rvw_0,R)$ with a finite radius, as long as the tangent kernel matrix at the center point $K(\rvw_0)$ is well-conditioned.

%--- some points on why Hessian may be bounded?????----

\begin{thm}[PL$^*$ condition via Hessian Norm]\label{thm:ntk}Given a point $\rvw_0\in \mathbb{R}^m$, suppose the tangent kernel matrix $K(\rvw_0)$ is strictly positive definite, i.e., $\lambda_0:=\lambda_{min}(K(\rvw_0))>0$. If the Hessian spectral norm $\|\rmH_\F\|\le \frac{\lambda_0-\mu}{2L_\F\sqrt{n}R}$ holds within the ball $B(\rvw_0,R)$ for some $R>0$ and $\mu>0$,  then the tangent kernel $K(\rvw)$ is $\mu$-uniformly conditioned in the ball $B(\rvw_0,R)$. Hence, the square loss $\l(\rvw)$ satisfies the $\mu$-PL$^*$ condition in $B(\rvw_0,R)$.
\end{thm}

\begin{proof}%[Proof of Theorem~\ref{thm:ntk}]
First, let's consider the difference between the tangent kernel matrices at $\rvw\in B(\rvw_0,R)$ and at $\rvw_0$.

By the assumption, we have, for all $\rvv\in B(\rvw_0,R)$, $\|\rmH_\F(\rvv)\|< \frac{\lambda_0-\mu}{2L_\F\sqrt{n}R}$. Hence, for each $i\in [n]$, $\|H_{\F_i}(\rvw)\|_{2} < \frac{\lambda_0-\mu}{2L_\F\sqrt{n}R}$.
Now, consider an arbitrary point $\rvw\in B(\rvw_0,R)$. For all $i\in [n]$, we have:
\begin{equation}
    D \F_i(\rvw) = D \F_i(\rvw_0) + \int_{0}^{1} H_{\F_i}(\rvw_0+\tau (\rvw-\rvw_0)) (\rvw-\rvw_0) d\tau.
\end{equation}
Since $\tau$ is in $[0,1]$, $\rvw_0+\tau (\rvw-\rvw_0)$ is on the line segment $S(\rvw_0,\rvw)$ between $\rvw_0$ and $\rvw$, which is inside of the ball $B(\rvw_0,R)$. Hence,
\begin{eqnarray*}
\|D \F_i(\rvw)-D \F_i(\rvw_0)\| 
\le \sup_{\tau \in [0,1]}\|H_{\F_i}(\rvw_0+\tau (\rvw-\rvw_0))\|_{2}\cdot \|\rvw-\rvw_0\|
\le \frac{\lambda_0-\mu}{2L_\F\sqrt{n}R}\cdot R
= \frac{\lambda_0-\mu}{2L_\F\sqrt{n}}.
\end{eqnarray*}
In the second inequality above, we used the fact that $\|H_{\F_i}\|_{2} < \frac{\lambda_0-\mu}{2L_\F\sqrt{n}R}$ in the ball $B(\rvw_0,R)$. Hence,
\begin{equation*}
    \|D \F(\rvw)-D \F(\rvw_0)\|_F = \sqrt{\sum_{i\in [n]}\|D \F_i(\rvw)-D \F_i(\rvw_0)\|^2} \le \sqrt{n}\frac{\lambda_0-\mu}{2L_\F\sqrt{n}} = \frac{\lambda_0-\mu}{2L_\F}.
\end{equation*}
Then, the spectral norm of the tangent kernel change is bounded by
\begin{eqnarray*}
\|K(\rvw)-K(\rvw_0)\|_2 &=& \|D \F(\rvw)D \F(\rvw)^T-D \F(\rvw_0)D \F(\rvw_0)^T\|_2\\
%&\le&\|D F(\rvw)D F(\rvw)^T-D F(\rvw_0)D F(\rvw_0)^T\|_F\\
&=&\|D \F(\rvw)\left(D \F(\rvw)-D \F(\rvw_0)\right)^T + \left(D \F(\rvw)-D \F(\rvw_0)\right)D \F(\rvw_0)^T\|_2\\
&\le&\|D \F(\rvw)\|_2\|D \F(\rvw)-D \F(\rvw_0)\|_2 + \|D \F(\rvw)-D \F(\rvw_0)\|_2\|D \F(\rvw_0)\|_2\\
&\le&L_\F\cdot\|D \F(\rvw)-D \F(\rvw_0)\|_F + \|D \F(\rvw)-D \F(\rvw_0)\|_F\cdot L_\F\\
&\le&2L_\F\cdot \frac{\lambda_0-\mu}{2L_\F}\\
&=&\lambda_0-\mu.
\end{eqnarray*}
In the second inequality above, we used the $L_\F$-Lipschitz continuity of $\F$ and the fact that $\|A\|_2 \le \|A\|_F$ for a matrix $A$.

By triangular inequality, we have, at any point $\rvw\in B(\rvw_0,R)$,
\begin{equation}
    \lambda_{min}(K(\rvw)) \ge \lambda_{min}(K(\rvw_0))-\|K(\rvw)-K(\rvw_0)\|_2 \ge \mu.
\end{equation}
Hence, the tangent kernel is $\mu$-uniformly conditioned in the ball $B(\rvw_0,R)$.

By Theorem~\ref{thm:wellconditionandpl}, we immediately have that the square loss $\l(\rvw)$ satisfies $\mu$-PL$^*$ condition in the ball $B(\rvw_0,R)$.
\end{proof}

%{\color{blue}
Below in Section~\ref{sec:wide_nn_pl*}, we will see that wide neural networks with linear output layer have small Hessian norm (Theorem~\ref{thm:ntk}). An illustration  for a class of large models is given in Appendix~\ref{appsec:small_hessian}.

\paragraph{Conditioning of transformed systems.}

We now discuss why the conditioning of a system $\F(\rvw)=\rvy$ is preserved under a transformations of the domain or range of $\F$, as long as the original system is well-conditioned and the transformation has a bounded inverse.

\begin{remark}
Note that the even if the original system had a small Hessian norm, there is no such guarantee for the transformed system.
\end{remark}

Consider a transformation $\Phi: \mathbb{R}^n \to \mathbb{R}^n$ that, composed with $\F$, results in a new transformed system $\Phi\circ \F (\rvw)=\rvy$. 
Put 
\begin{equation}\label{eq:jacobian}
     \rho := \inf_{\rvw\in B(\rvw_0,R)} \frac{1}{\|J_{\Phi}^{-1}(\rvw)\|_2},
\end{equation}
where $J_{\Phi}(\rvw):= J_{\Phi}(\F(\rvw))$ is the Jacobian of $\Phi$ evaluated at $\F(w)$.
We will assume that $\rho>0$.
%Let the system $F$ is uniformly conditioned in the ball $B(\rvw_0,R)$ as discussed in Theorem~\ref{thm:plstar}. We assume that the Jacobian matrix of $\Phi$ satisfies the following:
%\begin{equation}\label{eq:jacobian}
%    \exists \rho > 0, \ \textrm{s.t. } \rho \le \inf_{\rvw\in B(\rvw_0,R)} 1/\|J_{\Phi}^{-1}(\rvw)\|_2,
%\end{equation}
%where we denote $J_{\Phi}(\rvw):= J_{\Phi}(F(\rvw))$.

%Then we have the following theorem for the transformed system $\Phi\circ F$:
\begin{thm}\label{thm:ext_pl}
If a system $\F$ is $\mu$-uniformly conditioned in a ball $B(\rvw_0,R)$ with $R>0$,  then the transformed system $\Phi\circ \F(\rvw)$ is $\mu\rho^2$-uniformly conditioned in $B(\rvw_0,R)$.
Hence, the square loss function $\frac{1}{2}\|\Phi\circ \F(\rvw)-\rvy\|^2$ satisfies $\mu\rho^2$-PL$^*$ condition in $B(\rvw_0,R)$.
% \begin{equation*}
%     \kappa_{\Phi\circ \F}(B(\rvw_0,R))  \le \frac{ \kappa_{\F}(B(\rvw_0,R))}{\rho^2}.
% \end{equation*}
\end{thm}
\begin{proof}
First, note that,
\begin{align*}
    K_{\Phi \circ \F}(\rvw) = \nabla (\Phi \circ \F)(\rvw)\nabla (\Phi \circ \F)^T(\rvw)
  =J_{\Phi}(\rvw)K_\F(\rvw;X) J_{\Phi}(\rvw)^T.
\end{align*}

\noindent Hence, if $\F(\rvw)$ is $\mu$-uniformly conditioned in $B(\rvw_0,R)$, i.e. $\lambda_{min}(K_\F(\rvw)) \geq \mu$, we have for any $\rvv \in \mathbb{R}^n$ with $\| \rvv\| =1$,
\begin{align*}
    \rvv^T K_{\Phi \circ \F}(\rvw)\rvv &= (J_{\Phi}(\rvw)^T \rvv)^T K_{\F}(\rvw)(J_{\Phi}(\rvw)^T \rvv)\\
    &\geq \lambda_{\min}(K_\F(\rvw)) \| J_{\Phi}(\rvw)^T \rvv\|^2   \\
    &\geq \lambda_{\min}(K_\F(\rvw)) /\|J_{\Phi}^{-1}(\rvw)\|_2^2 \ge \mu\rho^2.
\end{align*}
Applying Theorem~\ref{thm:wellconditionandpl}, we immediately obtain that $\frac{1}{2}\|\Phi\circ \F(\rvw)-\rvy\|^2$ satisfies $\mu\rho^2$-PL$^*$ condition in $B(\rvw_0,R)$.
\end{proof}
% \noindent Hence, by Theorem~\ref{thm:plstar}, an exponential convergence of gradient descent with appropriate choice of step size $\eta$ on the transformed system $\Phi\circ F$ is guaranteed:
% \begin{equation}
%     \l(\rvw_t) \le (1-\eta\mu\rho^2)^t \l(\rvw_0).
% \end{equation}

% make it a proposition 
\begin{remark}\label{remark:transform}
A result analogous to Theorem~\ref{thm:ext_pl} is easy to obtain for a system with transformed input $\F \circ \Psi(\rvw) = \rvy$.
Assume the transformation map $\Psi: \mathbb{R}^m \rightarrow \mathbb{R}^m$ that applies on the input of the system $\rvw$ satisfies
\begin{equation}
    \rho := \underset{\rvw \in B(\rvw_0,R)}{\inf}\frac{1}{\| J_\Psi^{-1} (\rvw)\|_2} > 0.
\end{equation}
If $\F$ is $\mu$-uniformly conditioned with respect to $\Psi(\rvw)$ in $B(\rvw_0,R)$, then an analysis similar to Theorem~\ref{thm:ext_pl} shows that $\F \circ \Psi$ is also $\mu\rho^2$-uniformly conditioned in $B(\rvw_0,R)$.
\end{remark}

% \noindent Note that we can similarly show that a sufficiently benign transformation of  the input vector $\rvw$ (instead  of the output $F(\rvw)$) also preserves the conditioning of the system $F$. That result (for the  PL-condition) is Claim~1 in~\cite{bassily2018exponential}.

\paragraph{Conditioning of composition  models.}
Although composing different large models often leads to non-constant tangent kernels, the corresponding tangent kernels can also be uniformly conditioned, under certain conditions. Consider the composition of two models $h := g\circ f$, where $f: \mathbb{R}^d \to \mathbb{R}^{d'}$ and $g: \mathbb{R}^{d'}\to \mathbb{R}^{d''}$. Denote $\rvw_f$ and $\rvw_g$ as the parameters of model $f$ and $g$ respectively. Then, the parameters of the composition model $h$ are $\rvw:= (\rvw_g,\rvw_f)$. 
Examples of the composition models include "bottleneck" neural networks, where the modules below or above the bottleneck layer can be considered as the composing (sub-)models.

Let's denote the tangent kernel matrices of models $g$ and $f$ by $K_g(\rvw_g;\mathcal{Z})$ and $K_f(\rvw_f;\mathcal{X})$ respectively, where the second arguments, $\mathcal{Z}$ and $\mathcal{X}$, are the datasets that the tangent kernel matrices are evaluated on. Given a dataset $\mathcal{D} = \{(\rvx_i, y_i)\}_{i=1}^n$, denote $f(\mathcal{D})$ as $\{(f(\rvx_i), y_i)\}_{i=1}^n$.
\begin{prop}\label{prop:composition}
Consider the composition model $h=g\circ f$ with parameters $\rvw = (\rvw_g, \rvw_f)$. Given a dataset $\mathcal{D}$, the tangent kernel matrix of $h$ takes the form:
$$K_h(\rvw; \mathcal{D}) = K_g(\rvw_g;f(\mathcal{D})) + J_{g}(f(\mathcal{D}))K_f(\rvw_f;\mathcal{D})J_{g}(f(\mathcal{D}))^T,$$
where $J_{g}(f(\mathcal{D})) \in \mathbb{R}^{d'' \times d'}$ is the Jacobian of $g$ w.r.t. $f$ evaluated on $f(\mathcal{D})$.
% Furthermore, the smallest eigenvalue $\lambda_{min}K_h(\rvw; \mathcal{D})$ is lower bounded:
% \begin{equation}
%     \lambda_{min}\left(K_h(\rvw; \mathcal{D})\right) \ge \max\left\{ \lambda_{min}\left(K_g(\rvw_g;f(\mathcal{D}))\right), \lambda_{min}\left(K_f(\rvw_f;\mathcal{D})\right)\min_{i\in[n]}g'(\rvw_g;f(\rvx_i))\right\}.
% \end{equation}
\end{prop}

From the above proposition, we see that the tangent kernel of the composition model $h$ can be decomposed into the sum of two positive semi-definite matrices, hence the minimum eigenvalue of $K_h$ can be lower bounded by  
\begin{equation}\label{eq:composition}
    \lambda_{\min} (K_{h}(\rvw)) \geq  \lambda_{\min}(K_g(\rvw_g;f(\mathcal{D}))).
\end{equation}
% where $K_{g}(\rvw;\rvx,\rvz) = (\nabla_{\rvw_g}h(\rvw;\rvx))^T\nabla_{\rvw_g}h(\rvw;\rvz)$ and $K_{f}(\rvw;\rvx,\rvz) = (\nabla_{\rvw_f}h(\rvw;\rvx))^T\nabla_{\rvw_f}h(\rvw;\rvz)$. Since both $K_g$ and $K_f$ are positive semi-definite, uniform conditioning of either $K_g$ or $K_f$ leads to the uniform conditioning of the tangent kernel $K_h$ for the composition model $h$. We note that the inputs of $K_g$ is uniformly conditioned 

We note that $g$ takes the outputs of $f$ as inputs which depends on $\rvw_f$, while for the model $f$ the inputs are fixed. 
Hence, if the model $g$ is uniformly conditioned at all the inputs provided by the model $f$, we can expect the uniform conditioning of the composition model $h$.

We provide a simple illustrative example for a ``bottleneck'' neural network in Appendix~\ref{secapp:compostion}.

\subsection{Wide neural networks satisfy PL$^*$ condition}\label{sec:wide_nn_pl*}
In this subsection, we show that wide neural networks satisfy the PL$^*$ condition, using the techniques we developed in the last subsection.
% \begin{itemize}
%     \item {\it Neural networks with linear output layer.} The output of the network is linear in the last hidden layer neurons. This type of networks has been found to transition to linearity~\cite{liu2020linearity} and to have constant NTK~\cite{jacot2018neural}, in the infinite width limit.
%     \item {\it Neural networks with non-linear output layer.} The output of the network is not linear in the last hidden layer neurons. In practice, this is often realized by adding a non-linear activation function, such as $sigmoid$ or $softmax$, onto a wide neural network of the first type.
% \end{itemize}
% Note that the types of network do not depend on the structure of previous layers as long as they are wide. Previous layers can be fully-connected, convolutional, residual or a mixture of these.

A $L$-layer (feedforward) neural network $f(\rmW;\rvx)$, with parameters $\rmW$ and input $\rvx$, is defined as follow:
\begin{align}\label{eq:generalnn}
 &\alpha^{(0)} = \rvx, \nonumber\\
 &\alpha^{(l)}= \sigma_{l}\left(\frac{1}{\sqrt{m_{l-1}}}W^{(l)}\alpha^{(l-1)}\right), \  \ \forall l = 1,2,\cdots, L+1,\nonumber\\
 &f(\rmW;\rvx) = \alpha^{(L+1)}.
\end{align} 

Here, $m_l$ is the width (i.e., number of neurons) of $l$-th layer, $\alpha^{(l)}\in\mathbb{R}^{m_l}$ denotes the vector of $l$-th hidden layer neurons, $\rmW := \{W^{(1)},W^{(2)},\ldots ,W^{(L)},W^{(L+1)}\}$ denotes the collection of the parameters (or weights) $W^{(l)}\in \mathbb{R}^{m_{l}\times m_{l-1}}$ of each layer, and $\sigma_{l}$ is the activation function of $l$-th layer, e.g., $sigmoid$, $tanh$, linear activation. We also denote the width of the neural network as $m:=\min_{l\in[L]}m_l$, i.e., the minimal width of the hidden layers. In the following analysis, we assume that the activation functions $\sigma_l$ are  twice differentiable. Although this assumption excludes ReLU, but we believe the same results also apply when the hidden layer activation functions are ReLU.

\begin{remark}
The above definition of neural networks does not include convolutional (CNN) and residual (ResNet) neural networks. In Appendix~\ref{appsec:cnn_resnet}, we show that both CNN and ResNet also satisfy the PL$^*$ condition. Please see the definitions and analysis there.
\end{remark} 

We study the loss landscape of wide neural networks in regions around randomly chosen points in parameter space. Specifically, we consider the ball $B(\rmW_0,R)$, which has a fixed radius $R>0$ (we will see later, in Section~\ref{sec:pl-conv}, that $R$ can be chosen to cover the whole optimization path) and is around a random parameter point $\rmW_0$, i.e., $W_0^{(l)}\sim \mathcal{N}(0,I_{m_l\times m_{l-1}})$ for $l\in[L+1]$. Note that such a random parameter point $\rmW_0$ is a common choice to initialize a neural network. {Importantly,  the tangent kernel matrix at $\rmW_0$ is generally strictly positive definite, i.e., $\lambda_{min}(K(\rmW_0))>0$. Indeed, this is proven for infinitely wide networks
as long as the training data is not degenerate (see Theorem 3.1 of \cite{du2018gradientshallow} and Proposition F.1 and F.2 of \cite{du2018gradientdeep}). As for  finite width networks,  with high probability w.r.t. the initialization randomness, its tangent kernel $K(\rmW_0)$ is close to that of the infinite network and the minimum eigenvalue $\lambda_{min}(K(\rmW_0)) = O(1)$.}

Using the techniques in Section~\ref{subsec:uc_ball}, the following theorem shows that neural networks with sufficient width satisfies the PL$^*$ condition in a ball of any fixed radius around $\rmW_0$, as long as the tangent kernel $K(\rmW_0)$ is strictly positive definite.

\begin{thm}[Wide neural networks satisfy PL$^{*}$ condition]\label{thm:deepnnpl}
Consider the neural network $f(\rmW;\rvx)$ in Eq.(\ref{eq:generalnn}), and a random parameter setting $\rmW_0$ such that $W_0^{(l)}\sim \mathcal{N}(0,I_{m_l\times m_{l-1}})$ for $l\in[L+1]$. Suppose that the last layer activation $\sigma_{L+1}$ satisfies $|\sigma'_{L+1}(z)| \ge \rho > 0$ and that  $\lambda_0:=\lambda_{min}(K(\rmW_0)) >0$.  For any $\mu \in (0,\lambda_0\rho^2)$, if the width of the network 
\begin{equation}\label{eq:width}
    m = \tilde{\Omega}\left(\frac{nR^{6L+2}}{(\lambda_0-\mu\rho^{-2})^2}\right),
\end{equation}
then $\mu$-PL$^*$ condition holds the square loss function in the ball $B(\rvw_0,R)$.
\end{thm}

\begin{remark}
In fact, it is not necessary to require $|\sigma_{L+1}'(z)|$ to be greater than $\rho$ for all $z$. The theorem still holds as long as $|\sigma_{L+1}'(z)|>\rho$ is true for all $z$ actually achieved by the output neuron before activation. 
\end{remark}
This theorem tells that while the loss landscape of wide neural networks is nowhere convex (as seen in Section~\ref{sec:nonconvex}), it can still can be described by the PL$^*$ condition at most points, in line with our general discussion. 

\paragraph{Proof of Theorem~\ref{thm:deepnnpl}.}%\label{subsec:pl_proof}

We divide the proof into two distinct steps based on representing an arbitrary neural network as a composition of network with a linear output layer and an output non-linearity $\sigma_{L+1}(\cdot)$. 
In Step 1 we prove the PL$^*$ condition for the case of a network with a linear output layer (i.e., $\sigma_{L+1}(z) \equiv z$). The argument relies on the fact that wide neural networks with linear output layer have small Hessian norm in a ball around initialization. 
In Step 2  for general networks we observe that an arbitrary neural network is simply a neural network with a linear output layer from Step 1 with output transformed by applying  $\sigma_{L+1}(z)$ coordinate-wise. We obtain the result by combining Theorem~\ref{thm:ext_pl} with Step 1.

%: the last layer activation $\sigma_{L+1}(\cdot)$ is (a) linear, and (b) non-linear. Note that this partition is independent of the structure of hidden layers. We are going to use different techniques to address the two different scenarios.

\paragraph{Step 1. Linear output layer: $\sigma_{L+1}(z)\equiv z$.} 
In this case, $\rho=1$ and the output layer of the network has a linear form, i.e., a linear combination of the units from the last hidden layer.

As was shown in~\cite{liu2020linearity}, for this type of networks with sufficient width, the model Hessian matrix have arbitrarily small spectral norm (a {\it transition to linearity}). This is formulated in the following theorem:
% where each vector-valued function $\phi_{l}(\rvw^{(l)};\cdot): \mathbb{R}^{m_{l-1}} \rightarrow \mathbb{R}^{m_l}$, with parameters $\rvw^{(l)} \in \mathbb{R}^{p_l}$, is considered as a layer of the network, and $m=m_L$ is the width of the last hidden layer.  This definition  includes the standard fully connected, convolutional (CNN) and residual (ResNet) neural networks as special  cases.

% \begin{remark}[Initialization and parameterization]
% We follow the NTK initialization/parameterization~\cite{jacot2018neural}, under which the constancy of the tangent kernel and the transition to linearity of the model had been initially observed~\cite{jacot2018neural,liu2020linearity}.  Specifically, each parameter (or weight) in $\rmW$ is drawn i.i.d. from the normal distribution, i.e., $W^{(l)}\sim \mathcal{N}(0,I_{m_l\times m_{l-1}})$, at initialization, denoted as $\rmW_0$. The factor $1/\sqrt{m_{l-1}}$ in each layer is required by the NTK parameterization in order that the output $f$, each hidden neuron, is of order $\Theta(1)$.
% \end{remark}

% This type of neural network with a sufficiently large width $m$ has been shown to have an arbitrarily small spectral norm of the model Hessian~\cite{liu2020linearity}. This is described in the following theorem.
\begin{thm}[Theorem 3.2 of \cite{liu2020linearity}: transition to linearity] \label{cor:different_width}
Consider a neural network $f(\rmW;\rvx)$ of the form Eq.(\ref{eq:generalnn}). Let $m$ be the minimum of the hidden layer widths, i.e., $m= \min_{l\in[L]} m_l$. Given any fixed $R>0$, and any  $\rmW \in B(\rmW_0,R):= \{\rmW: \|\rmW - \rmW_0\| \le R\}$,   with high probability over the initialization, the Hessian spectral norm satisfies the following:
\begin{equation}\label{eq:hessian_bound_nn}
    \|H_f(\rmW)\| = \tilde{O}\left(R^{3L}/{\sqrt{m}}\right). 
\end{equation}
\end{thm}
In Eq.(\ref{eq:hessian_bound_nn}), we explicitly write out the dependence of Hessian norm on the radius $R$, according to the proof in \cite{liu2020linearity}.

Directly plugging Eq.(\ref{eq:hessian_bound_nn}) into the condition of Theorem~\ref{thm:ntk}, letting $\epsilon = \lambda_0-\mu$ and noticing that $\rho=1$, we directly have the expression for the width $m$.

\paragraph{Step 2. General networks: $\sigma_{L+1}(\cdot)$ is non-linear.}
% Here we see that a sufficiently wide neural network with linear output layer satisfies the small Hessian norm condition in Theorem~\ref{thm:ntk}, and thus has its tangent kernel $K(\rmW)$ barely change in the ball $B(\rmW_0,R)$. By the theory we developed above, this neural network is uniformly conditioned, and hence satisfies the PL$^*$ condition which implies convergence of gradient methods, as long as the initial tangent kernel $K(\rmW_0)$ is positive definite. It is interesting to note that, although this type of network ``transitions to linearity''~\cite{liu2020linearity}, the loss landscape is never convex in neighborhoods of global minima, as we discussed in Section~\ref{sec:nonconvex}.

% In order to connect to the optimization theory in the present paper, we  set the radius $R$ to be the same as  in Theorem~\ref{thm:plstar} and assume the tangent kernel at initialization $K(\rmW_0)$ is strictly positive definite, i.e., $\lambda_0:=\lambda_{min}(K(\rmW_0)) > 0$ (this is proved, as long as the training data is not degenerate, in Proposition F.1 and F.2 of \cite{du2018gradientdeep}). For a given $\mu \in (0,\lambda_0)$, we set the radius $R = 2\|\F(\rmW_0)-\rvy\|L_f/\mu$, which is independent of $m$.  Formally, we have:

Wide neural networks with non-linear output layer generally do not exhibit transition to linearity  or near-constant tangent kernel, as was shown~\cite{liu2020linearity}. Despite that, these wide networks still satisfy the PL$^*$ condition in the ball $B(\rmW_0,R)$.
Observe that this type of network can be viewed as a composition of a non-linear transformation function $\sigma_{L+1}$ with a network $\tilde{f}$ which has a linear output layer:
\begin{equation}
    {f}(\rmW;\rvx) = \sigma_{L+1}(\tilde{f}(\rmW;\rvx)).
\end{equation}
By the same argument as in Step 1, we see that $\tilde{f}$, with the width as in Eq.(\ref{eq:width}), is $\frac{\mu}{\rho^{2}}$-uniformly conditioned.

Now we apply our analysis for transformed systems in Theorem~\ref{thm:ext_pl}. In this case, the transformation map $\Phi$ becomes a coordinate-wise transformation of the output given by
\begin{equation}
    \Phi(\cdot) = \diag{(\underbrace{\sigma_{L+1}(\cdot),\sigma_{L+1}(\cdot), \cdots, \sigma_{L+1}(\cdot)}_{n})},
\end{equation}
and the norm of the inverse Jacobian matrix, $\|J_{\Phi}^{-1}(\rvw)\|_2$ is 

\begin{equation}
    \|J_{\Phi}^{-1}(\rvw)\|_2 = \frac{1}{\min_{i\in[n]} \left|\sigma_{L+1}'(\tilde{f}(\rvw;\rvx_i))\right|} \le \rho^{-1}.
\end{equation}
Hence, the $\frac{\mu}{\rho^{2}}$-uniform conditioning of $\tilde{f}$ immediately implies $\mu$-uniform conditioning of $f$, as desired.

\section{PL$^*$ condition in a ball guarantees 
existence of solutions and fast convergence of (S)GD }\label{sec:pl-conv}
In this section, we show that fast convergence of gradient descent methods is guaranteed by the PL$^*$ condition in a ball with appropriate size. 
We assume the system $\F(\rvw)$ is $L_\F$-Lipschitz continuous and $\beta_\F$-smooth on the local region $\mathcal{S}$ that we considered. In what follows $\mathcal{S}$ will typically be a Euclidean ball $B(\rvw_0,R)$, with an appropriate radius $R$, chosen to cover the optimization path of GD or SGD. 

First, we define the (non-linear) condition number, as follows:
\begin{defi}[Condition number]\label{defi:oconditionnumber}
Consider a system in Eq.(\ref{eq:problem}), with a loss function $\l(\rvw)=\l(\F(\rvw),\rvy)$ and a set $\mathcal{S}\subset \mathbb{R}^m$. If the loss $\l(\rvw)$ is $\mu$-PL$^*$ conditioned on $\mathcal{S}$,  define the condition number $\kappa_{\l,\F}(\mathcal{S})$:
\begin{align}\label{eq:def_cond}
    \kappa_{\l,\F}(\mathcal{S}) := \frac{\sup_{\rvw\in \mathcal{S}}  \lambda_{max}(H_\l(\rvw))}{\mu}, 
\end{align}
where $H_\l(\rvw)$ is the Hessian matrix of the loss function. 
The condition number for {\it the square loss} (used throughout the paper) will be written as simply  $\kappa_\F(\mathcal{S})$, omitting the subscript $\l$.
\end{defi}
\begin{remark}
In the special case of a linear system $\F(\rvw) = A\rvw$ with square loss $\frac{1}{2}\|A\rvw-\rvy\|^2$, both the Hessian $ H_\l=A^T A$ and the tangent kernel $K(\rvw)=A\,A^T$ are constant matrices. 
As $AA^T$ and $A^TA$ have the same set of non-zero eigenvalues, the largest eigenvalue $\lambda_{max}(H_\l)$ is equal to $\lambda_{max}(K)$. In this case, the condition number $\kappa_{\F}(\mathcal{S})$ reduces to the standard condition number of the tangent kernel $K$,
\begin{equation}
    \kappa_{\F}(\mathcal{S}) = \frac{\lambda_{max}(K)}{\lambda_{min}(K)}.
\end{equation}
\end{remark}

%which typical cover the gradient descent optimization path $\mathcal{P}_{GD}$). 

% From Definition~\ref{defi:oconditionnumber}, we observe that uniform conditioning of $F(\rvw)$ on the set $\mathcal{S}$ provides an upper bound on the condition number $\kappa_F(\mathcal{S})$ for the square loss.
{Since $\F$ is $L_\F$-Lipschitz continuous and $\beta_\F$ smooth by assumption, we directly get the following by substituting the definition of the square loss function into Eq.~(\ref{eq:def_cond}).}
\begin{prop}\label{prop:cond_num} 
For the square  loss function $\l$,  the  condition number is upper bounded by:
\begin{equation}
    \kappa_\F(\mathcal{S}) \le \frac{L_\F^2 + \beta_\F\cdot \sup_{\rvw\in\mathcal{S}}\|\F(\rvw)-\rvy\|}{\mu}.
\end{equation}
\end{prop}
%As long as the output $F(\rvw)$ is finite for all $\rvw\in\mathcal{S}$, the condition number $\kappa_F(\mathcal{S})$ is bounded.
\begin{remark}
It is easy to see that the usual condition number $\kappa(\rvw) = \lambda_{max}(K(\rvw))/\lambda_{min}(K(\rvw))$ of the tangent kernel $K(\rvw)$, is upper bounded  by  $\kappa_\F(\mathcal{S})$.
%because: $\lambda_{max}(K(\rvw)) \le L_F^2 < L_F^2 + \beta_F\|F(\rvw_0)-\rvy\|$; $\lambda_{min}(K(\rvw))\ge \min_{\rvw\in \mathcal{S}} \lambda_{min}(K(\rvw))$.
\end{remark}

% We set the step size of gradient descent to be $\eta = \frac{1}{L_F^2 + \beta_F\|F(\rvw_0)-\rvy\|}$, where $\rvw_0$ are the initial parameters. 

Now, we are ready to present the optimization theory based on the PL$^*$ condition. First, let the arbitrary set $\mathcal{S}$ to be a Euclidean ball $B(\rvw_0,R)$ around the initialization $\rvw_0$ of gradient descent methods, with a reasonably large but finite radius $R$. The following theorem shows that satisfaction of the PL$^*$ condition on $B(\rvw_0,R)$ implies the existence of at least one global solution of the system in the same ball $B(\rvw_0,R)$. Moreover, following the original argument from~\cite{polyak1963gradient}, the PL$^*$ condition also implies fast convergence of gradient descent to a global solution $w^*$ in the ball $B(\rvw_0,R)$.

\begin{thm}[Local PL$^*$ condition $\Rightarrow$ existence of a solution + fast convergence]\label{thm:plstar}
Suppose the system $\F$ is $L_\F$-Lipschitz continuous and $\beta_\F$-smooth. If the square loss $\l(\rvw)$ satisfies the $\mu$-PL$^*$ condition in the ball $B(\rvw_0,R) := \{\rvw\in \mathbb{R}^m : \|\rvw-\rvw_0\| \le R\}$ with $R = \frac{2L_\F\|\F(\rvw_0)-\rvy\|}{\mu}$. Then we have the following:

\noindent (a) Existence of a solution: There exists a solution (global minimizer of $\l$) $\rvw^* \in B(\rvw_0,R)$, such that $\F(\rvw^{*})=\rvy$.

\noindent (b) Convergence of GD: Gradient descent with a step size $\eta \le 1/(L_\F^2 + \beta_\F \|\F(\rvw_0)-\rvy\|)$ converges to a global solution in  $B(\rvw_0,R)$, with an exponential (a.k.a. linear) convergence rate: 
\begin{equation}
    \l(\rvw_t) \le \left(1-{\kappa_{\F}^{-1}(B(\rvw_0,R))}\right)^t\l(\rvw_0).
\end{equation}
where the condition number $\kappa_{\F}(B(\rvw_0,R)) = \frac{1}{\eta\mu}$.
% \noindent(c) In particular, for the square loss $\l(\rvw) = \frac{1}{2}\|\F(\rvw)-\rvy\|^2$ and a step size 
% $\eta \le 1/(L_F^2 + \beta_F \cdot\|\F(\rvw_0)-\rvy\|)$, and the gradient descent converges to a solution  within the ball $B(\rvw_0,R)$ with $R = 2L_\F\|\F(\rvw_0)-\rvy\|/\mu$, with an exponential convergence rate:
% \begin{equation}
%     \l(\rvw_t) \le \left(1-\eta\mu\right)^t\l(\rvw_0).
% \end{equation}
\end{thm}

The proof of the theorem is deferred to Appendix \ref{secapp:plstar}.

It is interesting to note that the radius $R$ of the ball $B(\rvw_0,R)$ takes a finite value, which means that the optimization path $\{\rvw_t\}_{t=0}^\infty\subset B(\rvw_0,R)$ stretches at most a finite length and the optimization happens only at a finitely local region around the initialization. Hence, the conditioning of the tangent kernel and the satisfaction of the PL$^*$ condition outside of this ball are irrelevant to this optimization and are not required. 

Indeed, this radius $R$ has to be of order $\Omega(1)$. From the $L_\F$-Lipschitz continuity of  $\F(\rvw)$, it follows that there is no solution within the distance $\|\F(\rvw_0)-\rvy\|/L_\F$ from the initialization point. 
Any solution must have a Euclidean distance away from $\rvw_0$ at least $R_{min} = \|\F(\rvw_0)-\rvy\|/L_\F$, which is finite.
This means that the parameter update $\Delta \rvw = \rvw^*-\rvw_0$ must not be too small in terms of Euclidean distance. However, due to the large population $m$ of the model parameters, each individual parameter $w_i$ may take a small change during the gradient descent training, i.e., $|w_i-w_{0,i}| = O(1/\sqrt{m})$. Indeed, this is what happening for wide neural networks~\cite{jacot2018neural,liu2020linearity}.

% In summary, we show that, although the loss landscape is non-convex in any local neighborhood of a solution, a uniform conditioning of the tangent kernel in a ball $B(\rvw_0,R)$ is sufficient to imply the existence of and the convergence to a global solution, through the lens of local PL$^*$ condition. Furthermore, the convergence speed is controlled by the conditioning of the tangent kernel in the ball.

Below, we make an extension of the above theory: from (deterministic) gradient descent to stochastic gradient descent (SGD).

\paragraph{Convergence of SGD.}
In most practical machine learning settings, including typical problems of supervised learning, the loss function $\l(\rvw)$ has the form 
$$
\l(\rvw) = \sum_{i=1}^n \ell_i(\rvw).
$$

For example, for the square loss $\l(\rvw) = \sum_{i=1}^{n} \ell_i(\rvw)$,   where $\ell_i(\rvw) = \frac{1}{2} ( \F_i(\rvw) - y_i)^2$. Here the loss $\ell_i$ corresponds simply to the loss for $i$th equation.
%Here we assume $\l(\rvw) = \sum_{i=1}^n \ell_i(\rvw)$.\\
Mini-batch SGD  updates the parameter $\rvw$, according to the gradient of $s$ individual loss functions $\ell_i(\rvw)$ at a time:
\begin{equation*}
    \rvw_{t+1} = \rvw_t - \eta\sum_{i\in \mathcal{S}\subset[n]}\nabla\ell_i(\rvw_t), \forall t\in \mathbb{N}.
\end{equation*}
\noindent
We will assume that each element of the set $S$ is chosen uniformly at random at every iteration. 

%where $\mathcal{S}$ is a random mini-batch of $[n]$.\\

We now show that the PL$^*$ condition on $\l$ also implies exponential convergence of SGD within a ball, an SGD analogue of Theorem~\ref{thm:plstar}.   
%We further show that the local PL$^*$ condition also implies the exponential convergence of (mini-batch) SGD.\\
Our result can be considered as a local version of  Theorem~1 in~\cite{bassily2018exponential} which showed  exponential convergence of SGD,  assuming PL condition holds in the entire parameter space. See also~\cite{oymak2019overparameterized} for a related result. 
\begin{thm}\label{thm:sgd_plstar}
Assume each  $\ell_i(\rvw)$  is $\beta$-smooth and $\l(\rvw)$ satisfies the $\mu$-$PL^*$ condition in the ball $B(\rvw_0,R)$ with $R=\frac{2n\sqrt{2\beta\l(\rvw_0)}}{\mu \delta}$ where $\delta>0$. Then, with probability $1-\delta$, SGD with mini-batch size $s\in \mathbb{N}$ and step size $\eta\le \frac{n\mu}{n\beta(n^2\beta+\mu(s-1))}$ converges to a global solution in the ball $B(\rvw_0,R)$, with an exponential convergence rate:
\begin{align}
    \mathbb{E}[\l(\rvw_{t})]  \leq \left(1-\frac{\mu s \eta}{n}\right)^t \l(\rvw_{0}).
\end{align}
\end{thm}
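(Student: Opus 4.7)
The plan is to adapt the proof of Theorem~\ref{thm:plstar} to the stochastic setting, closely following the template of the global-PL analysis in~\cite{bassily2018exponential} but with the added complication that $\mathrm{PL}^*$ holds only on the ball $B(\rvw_0,R)$. First I would establish a one-step contraction in expectation whenever the current iterate lies in the ball, and then use a stopping-time / Markov argument to show that iterates never leave the ball except on a set of probability at most $\delta$.

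For the one-step contraction, since each $\ell_i$ is $\gamma$-smooth the total loss $\L=\sum_i\ell_i$ is $n\gamma$-smooth, so the standard descent lemma applied with $\rvw_{t+1}-\rvw_t=-\eta g_t$, $g_t=\sum_{i\in S_t}\nabla\ell_i(\rvw_t)$, gives
\begin{equation*}
\L(\rvw_{t+1})\le \L(\rvw_t)-\eta\langle\nabla\L(\rvw_t),g_t\rangle+\tfrac{n\gamma}{2}\eta^2\|g_t\|^2.
\end{equation*}
The key auxiliary inequality is the self-bounding property: for a non-negative $\gamma$-smooth scalar $\ell_i$, minimizing the smoothness upper bound over perturbations forces $\|\nabla\ell_i(\rvw)\|^2\le 2\gamma\,\ell_i(\rvw)$, which sums to $\sum_i\|\nabla\ell_i\|^2\le 2\gamma\,\L$. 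Expanding $\mathbb{E}[\|g_t\|^2\mid\rvw_t]$ against the uniform sampling of $S_t$ and invoking the self-bounding bound yields an estimate of the form $a\|\nabla\L\|^2+b\,\L(\rvw_t)$ with $a,b$ depending on $s,n,\gamma$. Taking conditional expectation of the descent inequality (noting $\mathbb{E}[g_t\mid\rvw_t]=\tfrac{s}{n}\nabla\L$) and applying $\mu$-$\mathrm{PL}^*$ to the residual $\|\nabla\L\|^2$ (converting it into $2\mu\L$) produces $\mathbb{E}[\L(\rvw_{t+1})\mid\rvw_t]\le(1-c(\eta))\,\L(\rvw_t)$ with a quadratic-in-$\eta$ contraction factor of the form $1-(2\mu s/n)\eta+B\eta^2$. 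Minimizing over $\eta$ picks out $\eta^*$ and leaves the factor equal to $1-\tfrac{\mu s\eta^*}{n}$, matching the statement.

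The localization uses the stopping time $\tau:=\min\{t:\rvw_t\notin B(\rvw_0,R)\}$: on $\{t<\tau\}$ the $\mathrm{PL}^*$ condition is available, so a straightforward induction gives
\begin{equation*}
\mathbb{E}\!\left[\L(\rvw_t)\,\mathbf{1}_{\{t<\tau\}}\right]\le\left(1-\tfrac{\mu s\eta^*}{n}\right)^{t}\L(\rvw_0).
\end{equation*}
Cauchy-Schwarz plus the self-bounding inequality give $\|g_t\|\le\sqrt{2\gamma s\,\L(\rvw_t)}$, and Jensen's inequality for $\sqrt{\cdot}$ then dominates the expected cumulative step length by a convergent geometric series
\begin{equation*}
\sum_{t\ge 0}\eta^*\,\mathbb{E}\!\left[\|g_t\|\,\mathbf{1}_{\{t<\tau\}}\right]\le\eta^*\sqrt{2\gamma s\,\L(\rvw_0)}\cdot\frac{2n}{\mu s\eta^*}=\frac{2n\sqrt{2\gamma\,\L(\rvw_0)}}{\mu\sqrt{s}}.
\end{equation*}
Since this expectation upper-bounds $\mathbb{E}[\sup_t\|\rvw_t-\rvw_0\|]$ via the triangle inequality, Markov's inequality yields $\Pr[\tau<\infty]\le\delta$ for the $R$ specified in the theorem. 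On the complementary $(1-\delta)$-event the iterates never leave $B(\rvw_0,R)$, the one-step contraction applies at every iteration, and the claimed exponential rate follows.

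I expect the main obstacle to be the coupling between the loss-contraction and stay-in-ball arguments, each of which seems to require the other; the stopping-time device above breaks the circularity by letting both arguments run only on $\{t<\tau\}$. A secondary technical point is the evaluation of the off-diagonal part of $\mathbb{E}[\|g_t\|^2]$, which depends on whether $S_t$ is drawn with or without replacement and produces the $\mu(s-1)$ contribution inside $\eta^*$; expanding $\|g_t\|^2=\sum_{i,j\in S_t}\langle\nabla\ell_i,\nabla\ell_j\rangle$ and computing the second moments of the indicators $\mathbf{1}_{\{i\in S_t\}}$ resolves this with careful bookkeeping.
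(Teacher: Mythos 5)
Your proposal is correct and follows essentially the same route as the paper's proof: a Bassily-style one-step contraction under the PL$^*$ condition, a bound on the expected cumulative step length via the self-bounding inequality $\|\nabla\ell_i\|^2\le 2\gamma\,\ell_i(\rvw)$ together with the geometric decay of $\mathbb{E}[\L(\rvw_t)]$, and Markov's inequality to conclude that the optimization path stays inside $B(\rvw_0,R)$ with probability $1-\delta$. The only real difference is presentational: you localize with an explicit stopping time, whereas the paper first runs the contraction argument as if PL$^*$ held globally and only afterwards restricts to the high-probability event on which the path never leaves the ball; your stopping-time bookkeeping is a cleaner resolution of exactly the circularity the paper handles informally.
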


\noindent The proof is deferred to Appendix \ref{prf:sgd_plstar}.

\subsection{Convergence for wide neural networks.}
Using the optimization theory developed above, we can now show convergence of (S)GD for sufficiently wide neural networks.
% As we have seen in Section~\ref{sec:wide_nn_pl*} neural networks with sufficient width satisfy the PL$^*$ condition in a ball $B(\rmW_0,R)$ of an arbitrary radius $R$. Now, let's set the ball radius $R$ to be $ 2L_\F\|\F(\rmW_0)-\rvy\|/\mu$, where $\F(\rmW_0) = (f(\rmW_0;\rvx_1), \ldots, f(\rmW_0; \rvx_n))$, $\rvy = (y_1,\ldots, y_n)$ and $L_\F=\sqrt{n}L_f$ with $L_f$ being the level of Lipschitz continuity of the network $f(\rmW;\rvx)$ w.r.t. the parameters $\rmW$. \libin{It's been proven in \cite{jacot2018neural} that $L_f^2 = K(\rmW_0)$ in the infinite limit of width within an $O(1)$ ball. Therefore when the width is sufficiently large, we can find an $L_f$ independent of the width. } Then, applying Theorem \ref{thm:plstar}, we obtain the following theorem for convergence of gradient descent
As we have seen in Section~\ref{sec:wide_nn_pl*},  the  loss landscapes of wide neural networks $f(\rmW;\rvx)$ defined in Eq.(\ref{eq:generalnn})  are $\mu$-PL$^*$ condition in a ball $B(\rmW_0,R)$ of an arbitrary radius $R$. We will now show convergence of GD for these models. 

%where $\mu \in (0,\lambda_0\rho^2)$. Here $\lambda_0 = K(\rmW_0)$ and $\rho = \underset{z\in \mathbb{R}}{\inf} \left|\sigma_{L+1}'(z)\right| >0$.

As before (Section~\ref{sec:wide_nn_pl*}) write $f(\rmW;\rvx) = \sigma_{L+1}(\tilde{f}(\rmW;\rvx))$ where $\tilde{f}(\rmW;\rvx)$ is a neural network with a linear output layer and we denoted $\F(\rmW) = (f(\rmW;\rvx_1), \ldots, f(\rmW; \rvx_n))$, $ \tilde{\F}(\rmW) = (\tilde{f}(\rmW;\rvx_1), \ldots, \tilde{f}(\rmW; \rvx_n))$ and $\rvy = (y_1,\ldots, y_n)$. We use tilde, e.g., $\tilde{O}(\cdot)$ to suppress logarithmic terms in Big-O notation.

We further assume $\sigma_{L+1}(\cdot)$ is $L_\sigma$-Lipschitz continuous and $\beta_\sigma$-smooth.

%Applying Theorem \ref{thm:plstar}, for the square loss function $\l(\rmW) = \frac{1}{2}\| \F(\rmW) - \rvy\|^2$, we obtain the following theorem to show the convergence of gradient descent.

\begin{thm}\label{cor:deepnn}
Consider the neural network $f(\rmW; \rvx)$ and its random initialization $\rmW_0$ under the same condition as in Theorem~\ref{thm:deepnnpl}. If the network width satisfies $m = \tilde{\Omega}\left(\frac{n}{\mu^{6L+2}(\lambda_0-\mu\rho^{-2})^2}\right)$,
% and the step size  $\eta < \frac{1}{nL_f^2 + \sqrt{n}\beta_f\|\F(\rmW_0)-\rvy\|}$,
then, 
with an appropriate step size, gradient descent converges to a global minimizer in the ball $B(\rmW_0,R)$, where 
% $R= 2\sqrt{n}L_f\|\F(\rmW_0)-\rvy\|/\mu$,
$R = O(1/\mu)$, 
with an exponential convergence rate:
\begin{equation}
    \l(\rmW_t) \le (1-\eta\mu)^t \l(\rmW_0).
\end{equation}
\end{thm}
\begin{proof}
The result is obtained by combining Theorem~\ref{thm:deepnnpl} and \ref{thm:plstar}, after setting the ball radius $R= 2L_\F\|\F(\rmW_0)-\rvy\|/\mu$ and choosing the step size $0<\eta < \frac{1}{L_\F^2 + \beta_\F\|\F(\rmW_0)-\rvy\|}$.

It remains to verify  that all of  the quantities $L_\F$, $\beta_\F$ and $\|\F(\rmW_0)-\rvy\|$ are of order $O(1)$ with respect to the network width $m$.
First note that, with the random initialization of $\rmW_0$ as in Theorem~\ref{thm:deepnnpl}, it is shown  in \cite{jacot2018neural} that with high probability,  $f(\rmW_0;\rvx) = O(1)$ when the width is sufficiently large under mild assumption on the non-linearity functions $\sigma_l(\cdot)$, $l=1,...,L+1$. Hence $\|\F(\rmW_0)-\rvy\| = O(1)$.
% Then referring to Theorem~\ref{thm:plstar}, we only need to find the constants $L_\F$ and $\beta_\F$. 

 Using the definition of $L_\F$, we have
\begin{align*}
    L_\F  &=\underset{\rmW \in B(\rmW_0,R)}{\sup} \|D\F(\rmW)\| \\
    &\leq L_\sigma\left(\|D\tilde{\F}(\rmW_0)\| + R
    \sqrt{n}\cdot\underset{\rmW \in B(\rmW_0,R)}{\sup} \|\rmH_{\tilde{\F}}(\rmW)\| \right)\\
    &= \sqrt{\|K_{\tilde{\F}}(\rmW_0)\|}L_\sigma + L_\sigma R\sqrt{n} \cdot \tilde{O}\left(\frac{1}{\sqrt{m}}\right) = O(1),
\end{align*}
where the last inequality follows from Theorem~\ref{cor:different_width}, and $\|K_{\tilde{\F}}(\rmW_0)\| = O(1)$ with high probability over the random initialization.

Finally, $\beta_\F$ is bounded as follows:
\begin{align*}
    \beta_\F &= \underset{\rmW \in B(\rmW_0,R)}{\sup} \|\rmH_\F(\rmW)\| \\
    &=   \underset{\rmW \in B(\rmW_0,R)}{\sup} \|H_{f_k}(\rmW)\| \\
    &=  \underset{\rmW \in B(\rmW_0,R)}{\sup} \left\| \sigma_{L+1}''\left(\tilde{f}_k(\rmW)\right) D\tilde{f}_k(\rmW) D\tilde{f}_k(\rmW)^T  + \sigma_{L+1}'\left(\tilde{f}_k(\rmW)\right) H_{\tilde{f}_k}(\rmW)\right\|\\ 
    &\leq  \beta_\sigma \left(\underset{\rmW \in B(\rmW_0,R)}{\sup} \|D\tilde{f}_k(\rmW)\|\right)^2 + L_\sigma \underset{\rmW \in B(\rmW_0,R)}{\sup} \|H_{\tilde{f}_k}(\rmW)\| \\
    &\leq \beta_\sigma \cdot O(1) + L_\sigma \cdot \tilde{O}\left(\frac{1}{\sqrt{m}}\right) = O(1),
\end{align*}
where $k = \underset{i\in[n]}{\mathrm{argmax }}\left(\underset{\rmW \in B(\rmW_0,R)}{\sup}\|H_{f_i}(\rmW)\|\right)$.
% Therefore, with sufficiently large width, i.e. $m$, we can find width-independent constants $L_\F$ and $\beta_\F$ that satisfy the condition.
\end{proof}
\begin{remark}
Using the same argument, a result similar to Theorem~\ref{cor:deepnn} but with a different convergence rate,
\begin{equation}
    \l(\rmW_t) \le (1-\eta s \mu)^t \l(\rmW_0).
\end{equation}
%\exp\left(-\frac{\mu s \eta}{n}\right)$, 
and with difference constants, can be obtained for SGD by applying Theorem~\ref{thm:sgd_plstar}.
\end{remark}

% \paragraph{Neural networks with non-linear output layer.} 

% We have shown $\tilde{f}$ defined in ??? satisfies $\mu\rho^2$-PL$^*$ condition in Corollary???.  By  Theorem~\ref{thm:plstar}, the uniform conditioning of $\tilde{f}$ immediately implies fast convergence rate of gradient descent for $\tilde{f}$, with a convergence rate:
% \begin{equation}
%     \l(\rvw_t) \le (1-\eta \mu\rho^2)^t \l(\rvw_0).
% \end{equation}
%We note that the convergence rate is independent of the width of the neural network $m$.

Note that (near) constancy of the  tangent kernel is not a necessary condition  for exponential convergence of gradient descent or (S)GD.

\section{Relaxation to PL$^*_\epsilon$ condition}\label{subsec:pl-epsilon}
In certain situations, for example mildly under-parameterized cases, the PL$^*$ condition may not hold exactly, since an exact solution for system $\F(\rvw)=\rvw$ may not exist. Fortunately, in practice, we do not need to run algorithms until exact convergence. Most of the time, 
{\it early stopping} is employed, i.e. we 
stop the algorithm once it achieves a certain small loss $\epsilon>0$. To account for that  case, we define PL$^*_\epsilon$ condition, a relaxed variant of PL$^*$ condition, which still implies a fast convergence of the gradient-based algorithms up to loss $\epsilon$.

\begin{defi}[PL$^*_{\epsilon}$ condition]\label{defi:pl_epsilon}
Given a set $\mathcal{S}\subset \mathbb{R}^m$ and $\epsilon>0$, define the set $\mathcal{S}_{\epsilon} := \{\rvw \in \mathcal{S} : \mathcal{L}(\rvw) \ge \epsilon\}$. A loss function $\mathcal{L}(\rvw)$ is $\mu$-PL$^*_{\epsilon}$ on $\mathcal{S}$, if the following holds:
\begin{equation}
    \frac{1}{2}\|\nabla \l(\rvw)\|^2 \ge \mu \l(\rvw), \ \forall \rvw\in \mathcal{S}_\epsilon.
\end{equation}
\end{defi}

Intuitively, the PL$^*_\epsilon$ condition is the same as PL$^*$ condition, except that the loss landscape can be arbitrary wherever the loss is less than $\epsilon$. This is illustrated in Figure~\ref{fig:pl-epsilon}.

%      \begin{figure}[t]
%         \centering
%         \includegraphics[width=0.4\linewidth]{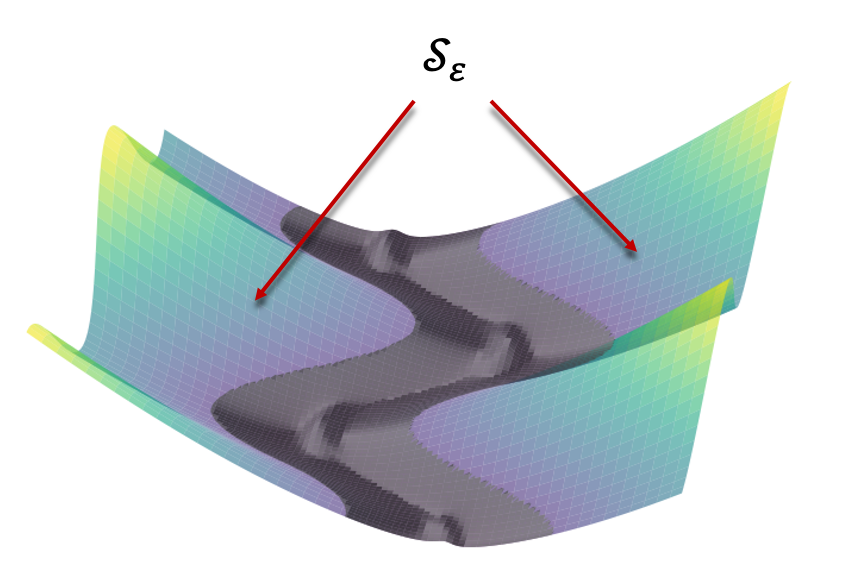} 
%         \caption{$\mathcal{S}_\epsilon$ in the loss landscape } \label{fig:pldomain}
%   \end{figure}
  
  \begin{wrapfigure}{R}{0.4\textwidth}
  \vspace{-26pt}
\begin{center}
  \includegraphics[width=0.4\textwidth]{} 
\end{center}
\caption{The loss landscape of under-parameterized systems. In the set $\mathcal{S}_\epsilon$, where the loss is larger than $\epsilon$, PL$^*$ still holds. Beyond that, the loss landscape can be arbitrary, which is the grey area, and PL$^*$ doesn't hold any more.}
\vspace{-20pt} \label{fig:pl-epsilon}
\end{wrapfigure}

Below, following a similar argument above, we show that a {\it local} PL$^*_\epsilon$ condition guarantees fast convergence to an approximation of a global minimizer. Basically, the gradient descent terminates when the loss is less than $\epsilon$.

% ????? square loss, put general loss to appendix ????

\begin{thm}[Local PL$^*_\epsilon$ condition $\Rightarrow$fast convergence]\label{thm:plstarep}
Assume the loss function $\l(\rvw)$ (not necessarily the square loss) is $\beta$-smooth and  satisfies the $\mu$-PL$^*_\epsilon$ condition in the ball $B(\rvw_0,R) := \{\rvw\in \mathbb{R}^m : \|\rvw-\rvw_0\| \le R\}$ with $R = \frac{2\sqrt{2\beta \l(\rvw_0)}}{\mu}$. We have the following:

 \noindent (a) Existence of a point which makes the loss less than $\epsilon$: There exists a point  $\rvw^* \in B(\rvw_0,R)$, such that $\l(\rvw^*)< \epsilon$.

\noindent (b) Convergence of GD : Gradient descent with the step size $\eta = 1/\sup_{\rvw\in B(\rvw_0,R)}\|H_\l(\rvw)\|_2$ after $T = \Omega(\log(1/\epsilon))$  iterations satisfies $\l(\rvw_T) < \epsilon$ in the ball $B(\rvw_0,R)$, with an exponential (also known as linear) convergence rate: 
\begin{equation}
    \l(\rvw_t) \le \left(1-{\kappa_{\l,F}^{-1}(B(\rvw_0,R))}\right)^t\l(\rvw_0), \ \textrm{for all } \ t \leq T,
\end{equation}
where the condition number $\kappa_{\l,F}(B(\rvw_0,R)) = \frac{1}{\eta\mu}$.

% \noindent(c) In particular, for the square loss $\l(\rvw) = \frac{1}{2}\|F(\rvw)-\rvy\|^2$, we can set step size 
% $\eta = 1/(L_F^2 + \beta_F \cdot\|F(\rvw_0)-\rvy\|)$, and the gradient descent satisfies $\l(\rvw_T)<\epsilon$ for $T = O(ln(\epsilon))$ in the ball $B(\rvw_0,R)$ with $R = 2L_F\|F(\rvw_0)-\rvy\|/\mu$, with an exponential convergence rate:
% \begin{equation}
%     \l(\rvw_t) \le \left(1-\eta\mu\right)^t\l(\rvw_0), \ t\leq T.
% \end{equation}
 \end{thm}

\begin{proof}
If $\l(\rvw_0) < \epsilon$, we let $\rvw^* = \rvw_0$ and we are done. 

Suppose $\l(\rvw_0) \geq \epsilon$. Following the similar analysis to the  proof of Theorem \ref{thm:plstar}, as long as $\l(\rvw_t) \geq \epsilon$ for $t\geq 0$, we have
\begin{align}
    \l(\rvw_t) \leq (1-\eta \mu)^t \l(\rvw_0).
\end{align}

Hence there must exist a minimum $T >0$ such that 
\begin{align}
    \l(\rvw_T) < \epsilon.
\end{align}
It's not hard to see that $\epsilon \leq (1-\eta\mu)^T\l(\rvw_0)$, from which we get $T = \Omega(\log(1/\epsilon)).$

And obviously, $\rvw_0,\rvw_1,...,\rvw_T$ are also in the ball $B(\rvw_0,R)$ with $R = \frac{2\sqrt{2\beta\l(\rvw_0)}}{\mu}$.
\end{proof}

Following similar analysis, when PL$^*_\epsilon$ condition holds in the ball, SGD converges exponentially to an approximation of the global solution.
\begin{thm}\label{thm:sgd_plstar_epsilon}
Assume each  $\ell_i(\rvw)$  is $\gamma$-smooth and $\l(\rvw)$ satisfies the $\mu$-$PL^*_{\epsilon}$ condition in the ball $B(\rvw_0,R)$ with $R=\frac{2n\sqrt{2\gamma}\sqrt{\l(\rvw_0)}}{\mu \delta}$ where $\delta>0$. Then, with probability $1-\delta$, SGD with mini-batch size $s\in \mathbb{N}$ and step size $\eta^*(s) =\frac{n\mu}{n\gamma(n^2\gamma+\mu(s-1))}$ after at most $T = \Omega(\log(1/\epsilon))$ iterations satisfies $\min\left(\mathbb{E}[\l(\rvw_T)],\l(\rvw_T)\right) < \epsilon$ in the ball $B(\rvw_0,R)$, with an exponential convergence rate:
\begin{align}
    \mathbb{E}[\l(\rvw_{t})]  \leq \left(1-\frac{\mu s \eta^*(s)}{n}\right)^t \l(\rvw_{0}),\ t \leq T.
\end{align}
\end{thm}
\begin{proof}
If $\l(\rvw_0) < \epsilon$, we let $T = 0$ and we are done.

Suppose $\l(\rvw_0) \geq \epsilon$. By the similar analysis in the proof of Theorem~\ref{thm:sgd_plstar}, with the $\mu$-PL$^*_\epsilon$ condition, for any mini-batch size $s$, the mini-batch SGD with step size $\eta^*(s) := \frac{n\mu}{n\lambda(n^2\lambda+\mu(s-1))}$ has an exponential convergence rate:
\begin{align}
    \mathbb{E}[\l(\rvw_{t})]  \leq \left(1-\frac{\mu s \eta^*(s)}{n}\right)^t \l(\rvw_{0}),
\end{align}
for all $t$ where $\l(\rvw_t) \geq \epsilon$.

Hence there must exist a minimum $T >0$ such that either $L(\rvw_T) < \epsilon $ or  $\mathbb{E}[\l(\rvw_{T})] < \epsilon$. Supposing $\mathbb{E}[\l(\rvw_{T})] < \epsilon$ at time step $T$, we have $\epsilon \leq \left(1-\frac{\mu s \eta^*(s)}{n}\right)^T \l(\rvw_{0})$, therefore $T = \Omega(\log(1/\epsilon)).$

And it's easy to check that with probability $1-\delta$, the optimization path $\rvw_0,\rvw_1,...,\rvw_T$ is covered by the ball $B(\rvw_0,R)$ with $R = \frac{2n\sqrt{2\gamma}\sqrt{\l(\rvw_0)}}{\mu \delta}$.
\end{proof}

While the global landscape of the loss function $\l(\rvw)$ can be complex,  the conditions above allow us to find solutions within  a certain ball  around the initialization point $\rvw_0$.

\section{Concluding thoughts and  comments}

In this paper we have proposed a general framework for understanding generically non-convex landscapes and optimization of over-parameterized systems in terms of the \pls condition. We have argued that PL* condition generally holds on most but not all of the parameter space, which is sufficient for the existence of solutions and convergence of gradient-base methods to global minimizers. 
In contrast, it is not possible for loss landscapes of under-parameterized systems $\|\F(\rvw) - \rvy\|^2$ to satisfy \pls for any $\rvy$. 
We conclude with a number of comments and observations.

\paragraph{Linear and non-linear systems.}
A  remarkable property of over-parameterized non-linear systems discussed in this work is their strong resemblance to linear systems with respect to optimization by (S)GD, even as their dynamics remain nonlinear. 
In particular, optimization by gradient-based methods and proximity to global minimizers is controlled by non-linear condition numbers,
similarly to classical analyses of linear systems.  The key difference is that while  for linear systems the condition number is constant, in the non-linear case we need a uniform bound in a domain containing the optimization path. In contrast, the optimization properties of non-linear systems in the {\it under-parameterized} regime appear very different from those of linear systems. 
Furthermore, increasing the degree of over-parameterization generally improves conditioning just like it does for linear systems (cf.~\cite{chen2005condition} and the discussion in~\cite{poggio2019double}).
In particular, this  suggests that the effectiveness of optimization should improve, up to a certain limit,  with increased over-parameterization. 
%LATER comment on practice

\paragraph{Transition over the interpolation threshold.} 
Recognizing the power of over-parameterization has been a key insight stemming from the practice of deep learning. Transition to over-parameterized models -- over the interpolation threshold -- leads to a qualitative change in a range of system properties.  
 Statistically, over-parameterized systems enter a new interpolating regime, where increasing the number of parameters, even indefinitely to infinity, can improve generalization~\cite{belkin2019reconciling,Spigler_2019}. From the optimization point of view, over-parameterized system are generally easier to solve. There has been significant effort (continued in this work) toward understanding effectiveness of local methods in this setting~\cite{soltanolkotabi2018theoretical,du2018gradientshallow,mei2018mean,arora2019fine,ji2019polylogarithmic,oymak2019towards}. 
%  In particular,  the work~\cite{soltanolkotabi2018theoretical} explicitly invokes the PL inequality.
% Furthermore, for interpolating models (and in contrast to under-parameterized models) Stochastic Gradient Descent converges  exponentially~\cite{moulines2011non} and, is, indeed, often far more computationally efficient than gradient descent~\cite{ma2018power}.  
%\end{itemize}

In this paper we note  another aspect of this  transition, to the best of our knowledge not addressed in the existing literature -- transition from local convexity to essential non-convexity. This relatively simple observation has significant consequences, indicating the need to depart from the machinery of convex analysis. Interestingly, our analyses suggest that this loss of local convexity is of little consequence for optimization, at least as far as gradient-based methods are concerned.

\paragraph{Transition from over-parameterization to under-parameterization along the optimization path.}

As discussed above, transition over the interpolation threshold occurs when the number of parameters in a variably parameterized system exceeds the number of constraints (corresponding to data points in typical ML scenarios). 
Over-parameterization does not refer to the number of parameters as such but to the difference between the number of parameters and the number of constraints. 
While some learning models are very large, with billions or even trillions parameters, they are often trained on equally large datasets and are thus not necessarily over-parameterized. Yet, it is still tempting to view these models through the lens of over-parameterization. While precise technical analysis is beyond the scope of this paper, we conjecture that transition between effective over-parameterization to under-parameterization happens along the optimization trajectory. Initially, the system behaves as over-parameterized but as the optimization process continues, it fails to reach zero. Mathematically it can be represented as our \plse condition. We speculate that for many realistic large models trained on big data the full optimization path lies within the \plse domain and hence, functionally, the analyses in this paper apply.

\paragraph{Condition numbers and optimization methods.} In this work we concentrate on optimization by gradient descent and SGD. Yet, for linear systems of equations and in many other settings, the importance of conditioning extends far beyond one specific optimization technique~\cite{burgisser2013condition}. We expect this to be case in the over-parameterized non-linear setting as well.  To give just one example, we expect that accelerated methods, such as the Nesterov's method~\cite{nesterov1983method} and its stochastic gradient extensions in the over-parameterized case~\cite{liu2018accelerating,vaswani2019fast} to have faster convergence rates for non-linear systems in terms of the condition numbers defined in this work.

\paragraph{Equations on manifolds.}
In this paper we consider systems of equations $\F(\rvw)=\rvy$ defined on Euclidean spaces and with Euclidean output.  A more general setting is to look for 
solutions of arbitrary systems of equations defined by a map between two Riemannian manifolds $\F:{\cal M} \to \cal{N}$.
In that case the loss function $\l$ needs to be defined on $\cal{N}$.
The over-parameterization corresponds to the case when dimension $\dim({\cal M}) > \dim({\cal N})$. While analyzing gradient descent requires some care on a manifold, most of the mathematical machinery, including the definitions of the \pls condition and the condition number associated to $\F$, is still applicable without significant change. In particular, as we discussed above (see Remark~\ref{remark:transform} in Section~\ref{subsec:uc_ball}), the condition number is preserved under ``well-behaved'' coordinate transformations. In contrast, this is not the case for the Hessian and thus manifold optimization analyses  based on {\it geodesic convexity} require knowledge about specific coordinate charts, such as those given by the exponential map. 

We note that manifold and structural assumptions on the weight vector $\rvw$ is a natural setting for addressing many problems in inference. In particular, the important class of convolutional  neural networks is an example of such a structural assumption on $\rvw$, which is made invariant to certain parallel transforms. Furthermore, there are many settings, e.g., robot motion planning, where the output of a predictor, $\rvy$, also belongs to a certain manifold.

\section*{Acknowledgements}
We thank Raef Bassily and Siyuan Ma for many earlier discussions about gradient methods and Polyak-{\L}ojasiewicz conditions and Stephen Wright for insightful comments and corrections.
The authors acknowledge support from the NSF, the Simons Foundation and a Google Faculty Research Award. 
 %The GPU used for the experiments was donated by Nvidia.

\printbibliography

\newpage

\appendix
\section{Wide neural networks have no isolated local/global minima}\label{secapp:no_isolated}
In this section, we show that, for feedforward neural networks, if the network width is sufficiently large, there is no isolated local/global minima in the loss landscape.

Consider the following feedforward neural networks:
\begin{equation}\label{eq:feedforwardnn}
    f(\rmW;\rvx):= W^{(L+1)}\phi_L(W^{(L)}\cdots \phi_1(W^{(1)}\rvx)).
\end{equation}
Here, $\rvx\in \mathbb{R}^d$ is the input, $L$ is the number of hidden layers of the network. Let $m_l$ be the width of $l$-th layer, $l\in[L]$, and $m_0=d$ and $m_{L+1}=c$ where $c$ is the dimension of the network output.  $\rmW := \{W^{(1)},W^{(2)},\cdots ,W^{(L)},W^{(L+1)}\}$, with $W^{(l)}\in\mathbb{R}^{m_{l-1}\times m_l}$, is the collection of parameters, and $\phi_{l}$ is the activation function at each hidden layer. The minimal width of hidden layers, i.e., width of the network, is denoted as $m:=\min \{m_1,\cdots, m_L\}$. The parameter space is denoted as $\mathcal{M}$. Here, we further assume the loss function $\mathcal{L}(\rmW)$ is of the form: 
\begin{equation}\label{eqapp:loss}
    \mathcal{L}(\rmW) = \sum_{i=1}^n l(f(\rmW;\rvx_i),\rvy_i),
\end{equation}
where loss $l(\cdot,\cdot)$ is convex in the first argument, and $(\rvx_i,\rvy_i)\in \mathbb{R}^{d}\times \mathbb{R}^c$ is one of the $n$ training samples.
% \begin{remark}
% This setup for the feedforward neural network can be considered as a special case of the more general definition of neural network defined in Eq.\ref{eq:generalnn}.
% \end{remark}

\begin{prop}[No isolated minima]\label{prop:no-iso}
Consider the feedforward neural network in Eq.(\ref{eq:feedforwardnn}). If the network width $m\ge 2c(n+1)^L$,  given a local (global) minimum $\rmW^*$ of the loss function Eq.(\ref{eqapp:loss}), there are always other local (global) minima in any neighborhood of $\rmW^*$.
\end{prop}

The main idea of this proposition is based on some of the intermediate-level results of the work~\cite{lederer2020no}. Before starting the proof, let's review some of the important concepts and results therein. To be consistent with the notation in this paper, we modify some of their notations.

\begin{defi}[Path constant]
Consider two parameters $\rmW,\rmV\in \mathcal{M}$. If there is a continuous function $h_{\rmW,\rmV}:[0,1]\to \mathcal{M}$ that satisfies $h_{\rmW,\rmV}(0)=\rmW$, $h_{\rmW,\rmV}(1)=\rmV$ and $t\mapsto\mathcal{L}(h_{\rmW,\rmV}(t))$ is constant, we say that $\rmW$ and $\rmV$ are {\it path constant} and write $\rmW \leftrightarrow \rmV$.
\end{defi}
Path constantness means the two parameters $\rmW$ and $\rmV$ are connected by a continuous path of parameters that is contained in a level set of the loss function $\mathcal{L}$.
\begin{lemma}[Transitivity]
$\rmW \leftrightarrow \rmV$ and $\rmV \leftrightarrow \rmU$ $\Rightarrow$ $\rmW \leftrightarrow \rmU$.
\end{lemma}
% We also use the concept of block parameters mentioned in \cite{lederer2020no}.
\begin{defi}[Block parameters]
Consider a number $s \in \{0,1,\cdots\}$ and a parameter $\rmW\in \mathcal{M}$. If 
\begin{eqnarray*}
& &\textrm{1. } W^{(1)}_{ji}=0 \ \textrm{for all } j> s;\\
& &\textrm{2. } W^{(l)}_{ij}=0 \ \textrm{for all } l\in [L] \textrm{ and } i> s, \textrm{ and for all } l\in [L] \textrm{ and } j> s;\\
& &\textrm{3. } W^{(L+1)}_{ij}=0 \ \textrm{for all } j> s,
\end{eqnarray*}
we call $\rmW$ an $s$-upper-block parameter of depth $L$.

\noindent Similarly, if 
\begin{eqnarray*}
& &\textrm{1. } W^{(1)}_{ji}=0 \ \textrm{for all } j\le m_1-s;\\
& &\textrm{2. } W^{(l)}_{ij}=0 \ \textrm{for all } l\in [L] \textrm{ and } i\le m_l-s, \textrm{ and for all } l\in [L] \textrm{ and } j\le m_{l-1}-s;\\
& &\textrm{3. } W^{(L+1)}_{ij}=0 \ \textrm{for all } j\le m_L-s,
\end{eqnarray*}
we call $\rmW$ an $s$-lower-block parameter of depth $L$. We denote the sets of the $s$-upper-block and $s$-lower-block parameters of depth $L$ by $\mathcal{U}_{s,L}$ and $\mathcal{V}_{s,L}$, respectively.

\end{defi}
The key result that we use in this paper is that every parameter is path constant to a block parameter, or more formally:
\begin{prop}[Path connections to block parameters~\cite{lederer2020no}]\label{prop_app:lederer}
For every parameter $\rmW\in \mathcal{M}$ and $s:=c(n+1)^L$ ($n$ is the number of training samples), there are $\overline{\rmW},\underline{\rmW}\in \mathcal{M}$ with $\overline{\rmW}\in \mathcal{U}_{s,L}$ and $\underline{\rmW}\in \mathcal{V}_{s,L}$ such that $\rmW\leftrightarrow \overline{\rmW}$ and $\rmW\leftrightarrow \underline{\rmW}$.
\end{prop}

The above proposition says that, if the network width $m$ is large enough, every parameter is path connected to both an upper-block parameter and a lower-block parameter, by continuous paths contained in a level set of the loss function.

Now, let's present the proof of Proposition~\ref{prop:no-iso}.
\begin{proof}[Proof of Proposition~\ref{prop:no-iso}]
Let $\rmW^*\in \mathcal{M}$ be an arbitrary local/global minimum of the loss function $\mathcal{L}(\rmW)$. According to Proposition~\ref{prop_app:lederer}, there exist an upper-block parameter $\overline{\rmW}\in \mathcal{U}_{s,L}\subset \mathcal{M}$ and $\underline{\rmW}\in \mathcal{V}_{s,L}\subset \mathcal{M}$ such that $\rmW\leftrightarrow \overline{\rmW}$ and $\rmW\leftrightarrow \underline{\rmW}$. Note that $\overline{\rmW}$ and $\underline{\rmW}$ are distinct, because $\mathcal{U}_{s,L}$ and $\mathcal{V}_{s,L}$ do not intersect except at zero due to $m\ge 2s$.
This means there must be parameters distinct from $\rmW^*$ that is connected to $\rmW^*$ via a continuous path contained in a level set of the loss function. Note that all the points (i.e., parameters) along this path have the same loss value as $\mathcal{L}(\rmW^*)$, hence are local/global minima. Therefore, $\rmW^*$ is not isolated (i.e, there are other local/global minima in any neighborhood of $\rmW^*$).
\end{proof}

\section{Proof of Proposition~\ref{prop:nolocalconx}}\label{prf:nolocalconx}
\begin{proof}
The Hessian matrix of a general loss function $\l(\F(\rvw))$ takes the form
\begin{align*}
    H_\l(\rvw) = D \F(\rvw)^T \frac{\partial^2\l}{\partial \F^2}(\rvw)D \F(\rvw) +\sum_{i=1}^n\frac{\partial\l}{\partial \F_i}(\rvw) H_{\F_i}(\rvw).
\end{align*}
Recall that $H_{\F_i}(\rvw)$ is the Hessian matrix of $i$-th output of $\F$ with respect to $\rvw$.

% Because the system $F$ is over-parameterized, there usually exists $\rvw^*$ s.t. $\l(\rvw^*) = \rvy$.  \\
We consider the Hessian matrices of $\l$ around a global minimizer $\rvw^*$ of the loss function, i.e., solution of the system of equations. Specifically, consider the following two points $\rvw^*+\boldsymbol{\delta}$ and $\rvw^*-\boldsymbol{\delta}$, which are in a sufficiently small neighborhood of the minimizer $\rvw^*$. Then Hessian of loss at these two points are
\begin{eqnarray*}
 H_\l(\rvw^*+\boldsymbol{\delta}) &=& \underbrace{D \F(\rvw^*+\boldsymbol{\delta})^T\frac{\partial^2\l}{\partial \F^2}(\rvw^*+\boldsymbol{\delta}) D \F(\rvw^*+\boldsymbol{\delta})}_{A(\rvw^*+\boldsymbol{\delta})} +\sum_{i=1}^n \left(\frac{d}{d\rvw}\left(\frac{\partial \l}{\partial \F}(\rvw^*)\right)\boldsymbol{\delta}\right)_i H_{\F_i}(\rvw^*+\boldsymbol{\delta}) + o(\| \boldsymbol{\delta}\|),\\
 H_\l(\rvw^*-\boldsymbol{\delta}) &=& \underbrace{D \F(\rvw^*-\boldsymbol{\delta})^T\frac{\partial^2\l}{\partial \F^2}(\rvw^*-\boldsymbol{\delta}) D \F(\rvw^*-\boldsymbol{\delta})}_{A(\rvw^*-\boldsymbol{\delta})} -\sum_{i=1}^n \left(\frac{d}{d\rvw}\left(\frac{\partial \l}{\partial \F}(\rvw^*)\right)\boldsymbol{\delta}\right)_i H_{\F_i}(\rvw^*+\boldsymbol{\delta})  + o(\| \boldsymbol{\delta}\|).
\end{eqnarray*}
% Similarly, at point $\rvw^*-\boldsymbol{\delta}$, $H_\l(\rvw^*-\boldsymbol{\delta})$ takes the form
% \begin{equation*}
%      H_\l(\rvw^*-\boldsymbol{\delta}) = D \F(\rvw^*-\boldsymbol{\delta})^T\frac{\partial^2\l}{\partial F^2}(\rvw^*-\boldsymbol{\delta}) D \F(\rvw^*-\boldsymbol{\delta}) -\frac{d}{d\rvw}\left(\frac{\partial \l}{\partial F}(\rvw^*)\right)\boldsymbol{\delta} \rmH(\rvw^*-\boldsymbol{\delta}) + o(\| \boldsymbol{\delta}\|)
% \end{equation*}
Note that both the terms $A(\rvw^*+\boldsymbol{\delta})$ and $A(\rvw^*-\boldsymbol{\delta})$ are matrices with rank at most $n$, since $D \F$ is of the size $n\times m$. 

By the assumption, at least one component $H_{\F_k}$ of the Hessian of $\F$ satisfies that the rank of $H_{\F_k}(\rvw^*)$ is greater than $2n$. By the continuity of the Hessian, we have that, if magnitude of $\boldsymbol{\delta}$ is sufficiently small, then the ranks of $H_{\F_k}(\rvw^*+\boldsymbol{\delta})$ and $H_{\F_k}(\rvw^*-\boldsymbol{\delta})$ are also greater than $2n$.
% Hence, the matrices $B(\rvw^*+\boldsymbol{\delta})$ and $B(\rvw^*-\boldsymbol{\delta})$ also have a rank greater than $2n$. 

Hence, we can always find a unit length vector $\rvv\in \mathbb{R}^m$ s.t.
\begin{equation}
    \rvv^TA(\rvw^*+\boldsymbol{\delta})\rvv = \rvv^TA(\rvw^*-\boldsymbol{\delta})\rvv = 0,
\end{equation}
but
\begin{equation}
    \rvv^T H_{\F_k}(\rvw^*+\boldsymbol{\delta})\rvv \ne 0, \quad  \rvv^T H_{\F_k}(\rvw^*-\boldsymbol{\delta})\rvv \ne 0.
\end{equation}
Consequently the vector $\langle \rvv^T H_{\F_1}(\rvw^*+\boldsymbol{\delta})\rvv, \ldots ,\rvv^T H_{\F_n}(\rvw^*+\boldsymbol{\delta})\rvv\rangle \neq \mathbf{0}$ and $\langle \rvv^T H_{\F_1}(\rvw^*-\boldsymbol{\delta})\rvv, \ldots ,\rvv^T H_{\F_n}(\rvw^*-\boldsymbol{\delta})\rvv\rangle \neq \mathbf{0}$.

% By assumption, $\exists i \in[n]$ s.t. rank($H_i(\rvw^*)$)$>2n$  which guarantees such $\rvv$ exists that
% \begin{equation*}
%     \rvv^T \rmH(\rvw^*)\rvv \neq \mathbf{0} 
% \end{equation*}
% This is due to fact that the linear space $\mathcal{S}$ has dimension at most $2n$ while $H_i(\rvw^*)$ has rank greater than $2n$ that we can find such $\rvv$ that is orthogonal to $\mathcal{S}$ but $\rvv^T H_i(\rvw^*)\rvv \neq 0$.\\
% Since  $\rmH(\cdot)$ is continuous, when $\boldsymbol{\delta}$ is small enough, $\forall \epsilon >0$, $\|\rmH(\rvw^*+\boldsymbol{\delta}) - \rmH(\rvw^*)\|_{op}<\epsilon$, hence  $\| \rvv^T (\rmH(\rvw^*+\boldsymbol{\delta}) - \rmH(\rvw^*))\rvv\|$ and $\| \rvv^T (\rmH(\rvw^*-\boldsymbol{\delta}) - \rmH(\rvw^*))\rvv\|$ is accordingly bounded by $\epsilon$. Then we have for all $i \in [n]$
% \begin{align*}
%     \rvv^T H_i(\rvw^*+\boldsymbol{\delta})\rvv = \rvv^T H_i(\rvw^*)\rvv + \rvv^T (H_i(\rvw^*+\boldsymbol{\delta}) - H_i(\rvw^*))\rvv  = \rvv^T H_i(\rvw^*)\rvv + O(\epsilon)
% \end{align*}
% By the fact that $\rvv^T \rmH(\rvw^*)\rvv \neq \mathbf{0}$, we get $\rvv^T \rmH(\rvw^*+\boldsymbol{\delta})\rvv \neq \mathbf{0}$.\\
% Similarly,
% \begin{equation*}
%     \rvv^T \rmH(\rvw^*-\boldsymbol{\delta})\rvv \neq \mathbf{0} 
% \end{equation*}
With the same $\rvv$, we have
\begin{align}
 \label{eq:hessiandirec1}   \rvv^T  H_\l(\rvw^*+\boldsymbol{\delta})\rvv &= \sum_{i=1}^n \left(\frac{d}{d\rvw}\left(\frac{\partial \l}{\partial \F}(\rvw^*)\right)\boldsymbol{\delta}\right)_i \rvv^T H_{\F_i}(\rvw^*+\boldsymbol{\delta})\rvv + o(\| \boldsymbol{\delta}\|), \\
     \rvv^T  H_\l(\rvw^*-\boldsymbol{\delta})\rvv &=  \label{eq:hessiandirec2} -\sum_{i=1}^n\left(\frac{d}{d\rvw}\left(\frac{\partial \l}{\partial \F}(\rvw^*)\right)\boldsymbol{\delta}\right)_i\rvv^T H_{\F_i}(\rvw^*-\boldsymbol{\delta})\rvv + o(\| \boldsymbol{\delta}\|).
\end{align}
In the following, we show that, for sufficiently small $\boldsymbol{\delta}$, $\rvv^T  H_\l(\rvw^*+\boldsymbol{\delta})\rvv$ and $\rvv^T  H_\l(\rvw^*-\boldsymbol{\delta})\rvv$ can not be non-negative simultaneously, which immediately implies that $H_\l$ is not positive semi-definite in the close neighborhood of $\rvw^*$, hence $\l$ is not locally convex at $\rvw^*$. 

Specifically,
with the condition $\frac{d}{d\rvw}\left(\frac{\partial \l}{\partial \F}(\rvw^*)\right)\neq \mathbf{0}$,  for Eq.(\ref{eq:hessiandirec1}) and Eq.(\ref{eq:hessiandirec2}) we have the following cases:\\
{\it Case 1} : If $\sum_{i=1}^n\left(\frac{d}{d\rvw}\left(\frac{\partial \l}{\partial \F}(\rvw^*)\right)\boldsymbol{\delta}\right)_i \rvv^T H_{\F_i}(\rvw^*+\boldsymbol{\delta})\rvv < 0$, then directly $\rvv^T  H_\l(\rvw^*+\boldsymbol{\delta})\rvv < 0$ if $\boldsymbol{\delta}$ is small enough which completes the proof.\\ 
{\it Case 2} : Otherwise if  $ \sum_{i=1}^n \left(\frac{d}{d\rvw}\left(\frac{\partial \l}{\partial \F}(\rvw^*)\right)\boldsymbol{\delta}\right)_i \rvv^T H_{\F_i}(\rvw^*+\boldsymbol{\delta})\rvv > 0$, by the continuity of each $H_{\F_i}(\cdot)$, we have
\begin{align*}
    & -\sum_{i=1}^n\left(\frac{d}{d\rvw}\left(\frac{\partial \l}{\partial \F}(\rvw^*)\right)\boldsymbol{\delta}\right)_i \rvv^T H_{\F_i}(\rvw^*-\boldsymbol{\delta})\rvv \\
    &= - \sum_{i=1}^n\left(\frac{d}{d\rvw}\left(\frac{\partial \l}{\partial \F}(\rvw^*)\right)\boldsymbol{\delta}\right)_i \rvv^T H_{\F_i}(\rvw^*+\boldsymbol{\delta})\rvv  +\sum_{i=1}^n\left(\frac{d}{d\rvw}\left(\frac{\partial \l}{\partial \F}(\rvw^*)\right)\boldsymbol{\delta}\right)_i \rvv^T (H_{\F_i}(\rvw^*+\boldsymbol{\delta})-H_{\F_i}(\rvw^*-\boldsymbol{\delta}))\rvv \\
    &= - \sum_{i=1}^n \left(\frac{d}{d\rvw}\left(\frac{\partial \l}{\partial \F}(\rvw^*)\right)\boldsymbol{\delta}\right)_i \rvv^T H_{\F_i}(\rvw^*+\boldsymbol{\delta})\rvv + O(\epsilon) \\
    &< 0,
\end{align*}
when $\boldsymbol{\delta}$ is small enough.
\paragraph{Note:} If $\sum_{i=1}^n\left(\frac{d}{d\rvw}\left(\frac{d\l}{d\F}(\rvw^*)\right)\boldsymbol{\delta}\right)_i \rvv^T H_{\F_i}(\rvw^*+\boldsymbol{\delta})\rvv = 0$, we can always adjust $\boldsymbol{\delta}$ a little so that it turns to case 1 or 2.\\

In conclusion, with certain $\boldsymbol{\delta}$ which is arbitrarily small, either $\rvv^T  H_\l(\rvw^*+\boldsymbol{\delta})\rvv$ or $\rvv^T  H_\l(\rvw^*-\boldsymbol{\delta})\rvv$ has to be negative which means $\l(\rvw)$ has no convex neighborhood around $\rvw^*$.
\end{proof}

\section{Small Hessian is a feature of certain large models}\label{appsec:small_hessian}

Here, we show that the small Hessian spectral norm is not an strong condition. In fact, it is a mathematical freebie as long as the model has certain structure and is large enough. For example, if a neural network has a linear output layer and is wide enough, its Hessian spectral norm can be arbitrarily small, see~\cite{liu2020linearity}.

In the following, let's consider an illustrative example. 
Let the model $f$ be a linear combination of $m$ independent sub-models, 
\begin{equation}\label{eq:generalsparse}
    f(\rvw;\rvx) = \frac{1}{s(m)}\sum_{i=1}^m v_i \alpha_{i}(\rvw_i;\rvx),
\end{equation}
where  $\rvw_i$ is the trainable parameters of the sub-model $\alpha_i$, $i\in [m]$, and $\frac{1}{s(m)}$ is a scaling factor that depends on $m$. The sub-model weights $v_i$ are independently randomly chosen from $\{-1,1\}$ and not trainable.
The parameters of the model $f$ are the concatenation of sub-model parameters: $\rvw := (\rvw_1, \cdots, \rvw_m)$ with $\rvw_i\in \mathbb{R}^p$. We make the following mild assumptions.

\begin{assumption}\label{assu:1}
For simplicity, we assume that the sub-models $\alpha_i(\rvw_i,\rvx)$ have the same structure but different initial parameters $\rvw_{i,0}$ due to random initialization. We further assume each sub-model has a $\Theta(1)$ output, and is second-order differentiable and $\beta$-smooth.
\end{assumption} 

Due to the randomness of the sub-model weights $v_i$, the scaling factor $\frac{1}{s(m)} = o(1)$ w.r.t. the size $m$ of the model $f$ (e.g, $\frac{1}{s(m)} = \frac{1}{\sqrt{m}}$ for neural networks~\cite{jacot2018neural,liu2020linearity}). 

The following theorem states that, as long as the model size $m$ is sufficiently large, the Hessian spectral norm of the model $f$ is arbitrarily small.
\begin{thm}
Consider the model $f$ defined in Eq.(\ref{eq:generalsparse}). Under Assumption \ref{assu:1}, the spectral norm of the Hessian of model $f$ satisfies:
\begin{equation}
    \|H_f\|\le\frac{\beta}{s(m)}.
\end{equation}
\end{thm}
\begin{proof}
An entry $(H_f)_{jk}$, $j,k\in \mathbb{R}^{m\times p}$, of the model Hessian matrix is
\begin{equation}\label{eq:hessian_simple_model}
    (H_f)_{jk} = \frac{1}{s(m)}\sum_{i=1}^m v_i\frac{\partial^2\alpha_i}{\partial w_j\partial w_k} =:\frac{1}{s(m)}\sum_{i=1}^m v_i(H_{\alpha_i})_{jk},
    % \leq \frac{1}{s(K)}\sum_{k=1}^{C_s} v_k\frac{d^2\alpha_k(\rvw)}{dw_idw_j} \leq \frac{1}{s(K)}C_sC_v\beta \sim o(1)
\end{equation}
with $H_{\alpha_i}$ being the Hessian matrix of the sub-model $\alpha_i$.
Because the parameters of $f$ is the concatenation of sub-model parameters and the sub-models share no common parameters, in 
the summation of Eq.(\ref{eq:hessian_simple_model}), there must be at most one non-zero term (non-zero only when $w_j$ and $w_k$ belong to the same model $\alpha_i$. Thus, the Hessian spectral norm can be bounded by 
\begin{equation}
    \|H_f\|\le \frac{1}{s(m)} \max_{i\in [m]}|v_i|\cdot\|H_{\alpha_i}\| \le \frac{1}{s(m)}\cdot \beta.
\end{equation}
\end{proof}

\section{An illustrative example of composition models}\label{secapp:compostion}

Consider the model $h = g\circ f$ as a composition of 2 random Fourier feature models $f: \mathbb{R} \to \mathbb{R}$ and $g: \mathbb{R} \to \mathbb{R}$ with:
\begin{equation}
    f(\rvu;x) = \frac{1}{\sqrt{m}}\sum_{k=1}^{m} \left(u_k \cos (\omega_kx) + u_{k+m}\sin (\omega_k x)\right),
\end{equation}
\begin{equation}
        g(\rva; z) = \frac{1}{\sqrt{m}}\sum_{k=1}^{m} \left(a_k \cos (\nu_kz) + a_{k+m}\sin (\nu_k z)\right).
\end{equation}
Here we set the frequencies $\omega_k \sim \mathcal{N}(0,1)$ and $\nu_k \sim \mathcal{N}(0,n^2)$,  for all $k\in [m]$, to be fixed. The trainable parameters of the model are $(\rvu, \rva)\in \mathbb{R}^{2m}\times \mathbb{R}^{2m}$. It's not hard to see that the model $h(x) = g\circ f(x)$ can be viewed as a 4-layer "bottleneck" neural network where the second hidden layer has only one neuron, i.e. the output of $f$.

For simplicity, we let the input $x$ be $1$-dimensional, and denote the training data as $x_1,...,x_n \in \mathbb{R}$. We consider the case of both sub-models, $f$ and $g$, are large, especially $m\to \infty$.

By Proposition~\ref{prop:composition}, the tangent kernel matrix of $h$, i.e. $K_h$, can be decomposed into the sum of two positive  semi-definite matrices, and the uniform conditioning of $K_h$ can be guaranteed if one of them is uniformly conditioned, as demonstrated in Eq.(\ref{eq:composition}). In the following, we show $K_g$ is uniformly conditioned by making $f$ well separated.

We assume the training data are not degenerated and the parameters $\rvu$ are randomly initialized. This makes sure the initial outputs of $f$, which are the initial inputs of $g$, are not degenerated, with high probability. For example, let $\min_{i\ne j}|x_i-x_j| \ge \sqrt{2}$ and initialize $\rvu$ by $\mathcal{N}(0,100R^2)$ with a given number $R > 0$. Then, we have 
\begin{equation}
    f(\rvu_0;x_i) - f(\rvu_0;x_j) \sim \mathcal{N}\left(0,200R^2-200R^2 e^{\frac{-|x_i-x_j|^2}{2}}\right), \ \forall i,j \in [n].\nonumber
\end{equation}
Since $\min_{i\ne j}|x_i-x_j| \ge \sqrt{2}$, the variance $Var:=200R^2-200R^2 e^{\frac{-|x_i-x_j|^2}{2}} > 100R^2$. For this Gaussian distribution, we have, with probability at least $0.96$, that
\begin{equation}\label{eqapp:bottle_upper}
    \min_{i\ne j}|f(\rvu_0;x_i) - f(\rvu_0;x_j)| > 2R.
\end{equation}

For this model, the partial tangent kernel $K_g$ is the Gaussian kernel in the limit of $m\to \infty$, with the following entries:
\begin{equation}
    K_{g,ij}(\rvu)= \exp\left(-\frac{n^2|f(\rvu;x_i)-f(\rvu;x_j)|^2}{2}\right).\nonumber
\end{equation}
By the Gershgorin circle theorem, its smallest eigenvalue is lower bounded by:
\begin{equation*}
    \inf_{\rvu\in \mathcal{S}}\lambda_{min}(K_g(\rvu)) \ge 1-(n-1)\exp\left(-\frac{n^2\rho(\mathcal{S})^2}{2}\right),
\end{equation*}
where $\rho(\mathcal{S}):= \inf_{\rvu\in \mathcal{S}}\min_{i\ne j}|f(\rvu;x_i)-f(\rvu;x_j)|$ is the minimum separation between the outputs of $f$, i.e., inputs of $g$ in $\mathcal{S}$.  If $\rho(\mathcal{S})\ge \sqrt{2\ln (2n-2)}/n$, then we have $\inf_{\rvu\in \mathcal{S}}\lambda_{min}(K_g(\rvu))\ge 1/2$. Therefore, we see that the uniform conditioning of $K_g$, hence that of $K_h$, is controlled by the separation between the inputs of $g$, i.e., outputs of $f$.

Within the ball $B((\rvu_0,\rva_0),R):=\{ (\rvu,\rva) : \| \rvu - \rvu_0\|^2 + \| \rva - \rva_0\|^2 \leq R^2\}$ with an arbitrary radius $R>0$, the outputs of $f$ are always well separated, given that the initial outputs of $f$ are separated by $2R$ as already discussed above. This is because 
\begin{equation}
    |f(\rvu;x)-f(\rvu_0;x)| = \frac{1}{\sqrt{m}}\left|\sum_{k=1}^m\left((u_k-u_{0,k})\cos (\omega_k x)+(u_{k+m}-u_{0,k+m})\sin (\omega_k x)\right)\right| \le \|\rvu-\rvu_0\|\le R,\nonumber
\end{equation}
which leads to $\rho(B((\rvu_0,\rva_0),R)) > R$ by Eq.(\ref{eqapp:bottle_upper}). By choosing $R$ appropriately, to be specific, $R\geq \sqrt{2\ln(2n-2)}/n$, the uniform conditioning of $K_g$ is satisfied.

% \smallskip

Hence, we see that composing large non-linear models may make the tangent kernel no longer constant, but the uniform conditioning of the tangent kernel can remain.

\section{Wide CNN and ResNet satisfy PL$^*$ condition}\label{appsec:cnn_resnet}
In this section, we will show that wide Convolutional Neural Networks (CNN) and Residual Neural Networks (ResNet) also satisfy the PL$^*$ condition. 

The CNN is defined as follows:

\begin{align}\label{eq:cnnlayer}
 &\alpha^{(0)} = \rvx, \nonumber\\
    &\alpha^{(l)}  = \sigma_l\left(\frac{1}{\sqrt{m_{l-1}}} W^{(l)} \ast \alpha^{(l-1)}\right), \ \forall l = 1,2,\ldots ,L,\nonumber\\
 &f(\rmW;\rvx) = \frac{1}{\sqrt{m_L}} \langle W^{(L+1)}, \alpha^{(L+1)}\rangle,
\end{align} 
where $*$ is the convolution operator (see the definition below) and $\langle \cdot,\cdot\rangle$ is the standard matrix inner product. 

Compared to the definition of fully-connected neural networks in Eq.(\ref{eq:generalnn}), the $l$-th hidden neurons $\alpha^{(l)}\in \mathbb{R}^{m_l\times Q}$ is a matrix where $m_l$ is the number of channels and $Q$ is the number of pixels, and $W^{(l)} \in \mathbb{R}^{K\times m_{l} \times m_{l-1}}$ is an order 3 tensor where $K$ is the filter size except that $W^{(L+1)} \in \mathbb{R}^{m_{L}\times Q}$. 

For the simplicity of the notation, we give the definition of convolution operation for 1-D CNN in the following. We note that it's not hard to extend to higher dimensional CNNs and one will find that our analysis still applies.

 \begin{equation}\label{eq:convolution}
     \left(W^{(l)} \ast \alpha^{(l-1)}\right)_{i,q} = \sum_{k=1}^K \sum_{j=1}^{m_{l-1}} W^{(l)}_{k,i,j} \alpha^{(l-1)}_{j,q+k-\frac{K+1}{2}}, \ i\in [m_l], q\in [Q].
 \end{equation}
 
 The ResNet is similarly defined as follows:
 
 \begin{align}\label{eq:reslayer}
 &\alpha^{(0)} = \rvx, \nonumber\\
 &\alpha^{(l)}= \sigma_{l}\left(\frac{1}{\sqrt{m_{l-1}}}W^{(l)}\alpha^{(l-1)}\right) + \alpha^{(l-1)}, \  \ \forall l = 1,2,\cdots, L+1,\nonumber\\
 &f(\rmW;\rvx) = \alpha^{(L+1)}.
\end{align} 

We see that the ResNet is the same as a fully connected neural network, Eq.~(\ref{eq:generalnn}), except that the activations $\alpha^{(l)}$ has an extra additive term $\alpha^{(l-1)}$ from the previous layer, interpreted as skip connection.

\begin{remark}
This definition of ResNet differs from the standard ResNet architecture in~\cite{he2016deep}
% [He et al.]\footnote{Kaiming He, Xiangyu Zhang, Shaoqing Ren, and Jian Sun. “Deep residual learning for image recognition”. In: {\it Proceedings of the IEEE conference on computer vision and pattern recognition.} 2016, pp. 770-778.}  
that the skip connections are at every layer, instead of every two layers. One will find that the same analysis can be easily generalized to cases where skip connections are at every two or more layer.  The same definition, up to a scaling factor, was also theoretically studied in \cite{du2018gradientdeep}.
\end{remark}

By the following theorem, we have an upper bound for the Hessian spectrum norm of the CNN and ResNet, similar to Theorem~\ref{cor:different_width} for fully-connected neural networks. 

\begin{thm}[Theorem 3.3 of \cite{liu2020linearity}] \label{thm:different_width_res_cnn}
Consider a neural network $f(\rmW;\rvx)$ of the form Eq.(\ref{eq:cnnlayer}) or Eq.(\ref{eq:reslayer}). Let $m$ be the minimum of the hidden layer widths, i.e., $m= \min_{l\in[L]} m_l$. Given any fixed $R>0$, and any  $\rmW \in B(\rmW_0,R):= \{\rmW: \|\rmW - \rmW_0\| \le R\}$,   with high probability over the initialization, the Hessian spectral norm satisfies the following:
\begin{equation}\label{eq:hessian_bound_cnn_res}
    \|H_f(\rmW)\| = \tilde{O}\left(R^{3L}/{\sqrt{m}}\right). 
\end{equation}
\end{thm}

Using the same analysis in Section~\ref{sec:wide_nn_pl*}, we can get a similar result with Theorem~\ref{thm:deepnnpl} for CNN and ResNet to show they also satisfy PL$^*$ condition:

\begin{thm}[Wide CNN and ResNet satisfy PL$^{*}$ condition]\label{thm:deepnnpl_res_cnn}
Consider the neural network $f(\rmW;\rvx)$ in Eq.(\ref{eq:cnnlayer}) or Eq.(\ref{eq:reslayer}), and a random parameter setting $\rmW_0$ such that  each element of $W_0^{(l)}$ for $l\in[L+1]$ follows $\mathcal{N}(0,1)$. Suppose that the last layer activation $\sigma_{L+1}$ satisfies $|\sigma'_{L+1}(z)| \ge \rho > 0$ and that  $\lambda_0:=\lambda_{min}(K(\rmW_0)) >0$.  For any $\mu \in (0,\lambda_0\rho^2)$, if the width of the network 
\begin{equation}\label{eq:width_cnn_res}
    m = \tilde{\Omega}\left(\frac{nR^{6L+2}}{(\lambda_0-\mu\rho^{-2})^2}\right),
\end{equation}
then $\mu$-PL$^*$ condition holds for square loss in the ball $B(\rmW_0,R)$.
\end{thm}

\section{Proof for convergence under PL$^*$ condition}\label{secapp:plstar}
In Theorem~\ref{thm:plstar}, the convergence of gradient descent is established in the case of square loss function. In fact, similar results (with a bit modification) hold for general loss functions. In the following theorem, we provide the convergence under PL$^*$ condition for general loss functions. Then, we prove Theorem~\ref{thm:plstar} and \ref{thm:plstar_general_loss} together.

\begin{thm}\label{thm:plstar_general_loss}
Suppose the loss function $\l(\rvw)$ (not necessarily the square loss) is $\beta$-smooth and  satisfies the $\mu$-PL$^*$ condition in the ball $B(\rvw_0,R) := \{\rvw\in \mathbb{R}^m : \|\rvw-\rvw_0\| \le R\}$ with $R = \frac{2\sqrt{2\beta \l(\rvw_0)}}{\mu}$. Then we have the following:

\noindent (a) Existence of a solution: There exists a solution (global minimizer of $\l$) $\rvw^* \in B(\rvw_0,R)$, such that $\F(\rvw^{*})=\rvy$.

\noindent (b) Convergence of GD: Gradient descent with a step size $\eta \le 1/\sup_{\rvw\in B(\rvw_0,R)}\|H_\l(\rvw)\|_2$ converges to a global solution in  $B(\rvw_0,R)$, with an exponential (also known as linear) convergence rate: 
\begin{equation}
    \l(\rvw_t) \le \left(1-{\kappa_{\l,\F}^{-1}(B(\rvw_0,R))}\right)^t\l(\rvw_0).
\end{equation}
where the condition number $\kappa_{\l,\F}(B(\rvw_0,R)) = \frac{1}{\eta\mu}$.
\end{thm}
\begin{proof}
Let's start with Theorem \ref{thm:plstar_general_loss}.
We prove this theorem by induction. The induction hypothesis is that, for all $t\ge 0$, $\rvw_t$ is within the ball $B(\rvw_0,R)$ with $R = \frac{2\sqrt{2\beta \l(\rvw_0)}}{\mu}$, and 
\begin{equation}\label{eqapp:plconvergence}
    \l(\rvw_t) \le \left(1-\eta\mu\right)^t\l(\rvw_0).
\end{equation}
In the base case, where $t=0$, it is trivial that $\rvw_0\in B(\rvw_0,R)$ and that $\l(\rvw_t) \le \left(1-\eta\mu\right)^0\l(\rvw_0).$

Suppose that, for a given $t\ge 0$, $\rvw_t$ is in the ball $B(\rvw_0,R)$ and Eq.(\ref{eqapp:plconvergence}) holds. As we show separately below (in Eq.(\ref{eqapp:radius}) we have $\rvw_{t+1} \in B(\rvw_0,R)$.
%By the $\beta$-smoothness of $\l$, the proof of which we will show later in Eq.(\ref{eqapp:radius}).\\
Hence we see that $\l$ is $(\sup_{\rvw\in B(\rvw_0,R)}\|H_\l(\rvw)\|_2)$-smooth in $B(\rvw_0,R)$. By  definition of $\eta  = 1/\sup_{\rvw\in B(\rvw_0,R)}\|H_L(\rvw)\|_2$ and, consequently, $\l$ is $1/\eta$-smooth. Using that we obtain the following:

% \begin{align}
%     \l(\rvw_{t+1}) -\l(\rvw_t) - \nabla\l(\rvw_t)(\rvw_{t+1}-\rvw_t) \leq \frac{1}{2\eta}\|\rvw_{t+1}-\rvw_t\|^2,
% \end{align}

\begin{align}
    \l(\rvw_{t+1}) -\l(\rvw_t) - \nabla\l(\rvw_t)(\rvw_{t+1}-\rvw_t) %&= %\frac{1}{2} (\rvw_{t+1}-\rvw_t)^T H_\l(\rvw_t + \tau(\rvw_{t+1}-\rvw_t))(\rvw_{t+1}-\rvw_t) \nonumber\\
    %&\leq \frac{1}{2} \| H_\l(\rvw_t + \tau(\rvw_{t+1}-\rvw_t)) \|_2 \| \rvw_{t+1}-\rvw_t\|^2 \nonumber \\
    &\leq \frac{1}{2}\sup_{\rvw\in B(\rvw_0,R)}\|H_\l\|_2 \| \rvw_{t+1}-\rvw_t\|^2 \nonumber \\
    &= \frac{1}{2\eta} \| \rvw_{t+1}-\rvw_t\|^2.
\end{align}

Taking $\rvw_{t+1}-\rvw_t = -\eta \nabla \l(\rvw_t)$ and by $\mu$-PL$^*$ condition at point $\rvw_t$,  we have

\begin{eqnarray}
\l(\rvw_{t+1})-\l(\rvw_t) \le -\frac{\eta}{2}\|\nabla \l(\rvw_t)\|^2 \le -\eta\mu \l(\rvw_t).
\end{eqnarray}

Therefore, 
\begin{equation}
    \l(\rvw_{t+1})\le(1-\eta\mu) \l(\rvw_t)\leq  \left(1-\eta\mu\right)^{t+1}\l(\rvw_0).
\end{equation}

To prove $\rvw_{t+1} \in B(\rvw_0,R)$, by the fact that $\l$ is $\beta$-smooth, we have
\begin{eqnarray}
 \|\rvw_{t+1}-\rvw_0\| &=&\eta \|\sum_{i=0}^t\nabla \l(\rvw_{i})\| \nonumber\\
 &\le& \eta \sum_{i=0}^t\|\nabla \l(\rvw_{i})\|\nonumber\\
 &\le&\eta \sum_{i=0}^t \sqrt{2\beta (\l(\rvw_i)-\l(\rvw_{i+1}))}\nonumber\\
 &\le& \eta \sum_{i=0}^t \sqrt{2\beta \l(\rvw_i)}\nonumber\\
 &\le&\eta \sqrt{2\beta}\left(\sum_{i=0}^t\left(1-\eta\mu\right)^{i/2}\right)\sqrt{\l(\rvw_0)}\nonumber\\
 &\le& \eta \sqrt{2\beta}\sqrt{\l(\rvw_0)} \frac{1}{1-\sqrt{1-\eta\mu}}\nonumber\\
 &\le& \frac{2\sqrt{2\beta}\sqrt{\l(\rvw_0)}}{\mu}\nonumber\\
 &=& R.\label{eqapp:radius}
\end{eqnarray}
Thus, $\rvw_{t+1}$ resides in the ball $B(\rvw_0,R)$.

% We note that $\eta = 1/\sup_{\rvw\in B(\rvw_0,R)}\|H_L\|_2 \geq 1/\beta$ due to the smoothness of $\l$. 

% By Taylor's Theorem we have

\noindent By the principle of induction, the hypothesis is true.

\noindent Now, we prove Theorem \ref{thm:plstar}, i.e., the particular case of square loss $\l(\rvw) = \frac{1}{2}\|\F(\rvw)-\rvy\|^2$. In this case, 
\begin{eqnarray}
    \nabla \l (\rvw) &=& (\F(\rvw)-\rvy)^TD \F(\rvw).
\end{eqnarray}
Hence, in Eq.(\ref{eqapp:radius}), we have the following instead:
\begin{eqnarray}
\|\rvw_{t+1}-\rvw_0\| &\le& \eta \sum_{i=0}^t\|\nabla \l(\rvw_{i})\|\nonumber\\
&\le&\eta \sum_{i=0}^t\|D \F(\rvw_{i})\|_2 \|\F(\rvw_{i})-\rvy)\|\nonumber\\
&\le&\eta L_\F\left(\sum_{i=0}^t (1-\mu/\beta_\F)^{i/2}\right)\|\F(\rvw_0)-\rvy)\|\nonumber\\
&\le& \eta L_\F \cdot \frac{2}{\eta \mu}\|\F(\rvw_0)-\rvy)\|\nonumber\\
&=& \frac{2L_\F\|\F(\rvw_0)-\rvy)\|}{\mu}.
\end{eqnarray}
Also note that, for all $t>0$, $\|H_\l(\rvw_t)\|_2 \le L_\F^2 + \beta_\F\cdot \|\F(\rvw_0)-\rvy\|$, since $\|\F(\rvw_t)-\rvy\| \le \|\F(\rvw_0)-\rvy\|$. Hence, the step size $\eta=1/L_F^2 + \beta_\F\cdot \|\F(\rvw_0)-\rvy\|$ is valid.
\end{proof}

\section{Proof of Theorem~\ref{thm:sgd_plstar}}\label{prf:sgd_plstar}
\begin{proof}
We first aggressively assume that the $\mu$-PL$^*$ condition holds in the whole parameter space $\mathbb{R}^m$. We will find that the condition can be relaxed to hold only in the ball $B(\rvw_0,R)$.

Following a similar analysis to the proof of Theorem 1 in \cite{bassily2018exponential}, by the $\mu$-PL$^*$ condition, we can get that for any mini-bath size $s$, the mini-batch SGD with step size $\eta^*(s) := \frac{n\mu}{n\lambda(n^2\lambda+\mu(s-1))}$ has an exponential convergence rate for all $t>0$:

% \begin{align*}
%     \l(\rvw_t) - \l(\rvw_{t+1}) \geq \eta \langle \nabla \l(\rvw_t), \sum_{j=1}^s \nabla \ell_{i_t^{(j)}}(\rvw_t)\rangle  - \frac{\eta^2\beta}{2}\| \sum_{j=1}^s\nabla \ell_{i_t^{(j)}}(\rvw_t)\|^2
% \end{align*}
% \begin{align*}
%     \mathbb{E} [\l(\rvw_t) - \l(\rvw_{t+1})] \geq \eta \frac{s}{n}\| \nabla \l(\rvw_t)\|^2 - \frac{\eta^2\beta}{2}\left(s\mathbb{E}[\|\nabla \ell_{i_t^{(1)}}(\rvw_t)\|^2 ] + \frac{s(s-1)}{n^2} \| \nabla \l(\rvw_t)\|^2\right)
% \end{align*}
% By the $\lambda$-smoothness of $\ell_{i_t^{(1)}}$, $\|\nabla \ell_{i_t^{(1)}}(\rvw_t)\|^2 \leq 2\lambda \ell_{i_t^{(1)}}(\rvw_t) $, then we have
% \begin{align*}
%     \mathbb{E} [\l(\rvw_t) - \l(\rvw_{t+1})] \geq (\eta \frac{s}{n} - \frac{\eta^2\beta s(s-1)}{2n^2})\| \nabla \l(\rvw_t)\|^2 - \eta^2\beta\lambda s \l(\rvw_t)
% \end{align*}
% Assuming $\eta \leq \frac{2}{\beta}$, since $\l(\rvw)$ is $\mu$-uniformly conditioned, we have
% \begin{align*}
%     \mathbb{E} [\l(\rvw_t) - \l(\rvw_{t+1})] \geq [2\mu(\eta \frac{s}{n} - \frac{\eta^2\beta s(s-1)}{2n^2})-\eta^2\beta\lambda s ]\l(\rvw_t)
% \end{align*}
% Therefore,
\begin{align}
    \mathbb{E}[\l(\rvw_{t})]  \leq \left(1-\frac{\mu s \eta^*(s)}{n}\right)^t \l(\rvw_{0}).
\end{align}
% \begin{align*}
%      \mathbb{E}[\l(\rvw_{t+1})] &\leq (1- \frac{2s\mu}{n}\eta + (\beta\lambda s+ \frac{\mu\beta s (s-1)}{n^2})\eta^2)\mathbb{E}[\l(\rvw_{t})]\\
%      &\leq (1 - \frac{\mu^2s}{\beta(n^2\lambda+\mu(s-1))})\mathbb{E}[\l(\rvw_{t})]  = (1-\frac{\mu s \eta^*(s)}{n})  \mathbb{E}[\l(\rvw_{t})]
% \end{align*}
Moreover, the expected length of each step is bounded by 
\begin{align*}
   \mathbb{E}\left[ \| \rvw_{t+1}-\rvw_t \| \right] &= \eta^* \mathbb{E}\left[ \|\sum_{j=1}^s \nabla\ell_{i_t^{(j)}}(\rvw_t)\| \right] \\
    &\leq \eta^*  \sum_{j=1}^s \mathbb{E}\left[\| \nabla \ell_{i_t^{(j)}}(\rvw_t)\|\right] \\
    &\leq \eta^*  \sum_{j=1}^s 
    \mathbb{E}\left[ \sqrt{2\lambda \ell_{i_t^{(j)}}} \right] \\
    &\leq \eta^* s  \mathbb{E}\left[ \sqrt{2\lambda\l(\rvw_t)}\right] \\
    &\leq \eta^*s\sqrt{2\lambda\mathbb{E}[\l(\rvw_t)]}\\
    &\leq \eta^*s \sqrt{2\lambda} \left(1-\frac{\mu s  \eta^*}{n}\right)^{t/2}\sqrt{\l(\rvw_0)},
\end{align*}
where we use $\{i_t^{(1)},i_t^{(2)},...,i_t^{(s)}\}$ to denote a random mini-batch of the dataset at step $t$.

Then the expectation of the length of the whole optimization path is bounded by
\begin{align*}
    \mathbb{E}\left[\sum_{i=0}^\infty \| \rvw_{i+1} - \rvw_i\| \right] &= \sum_{i=0}^\infty\mathbb{E}\| \rvw_{i+1} - \rvw_i\|\\
    &= \sum_{i=0}^\infty\eta^*s \sqrt{2\lambda} \left(1-\frac{\mu s  \eta^*}{n}\right)^{t/2}\sqrt{\l(\rvw_0)} = \frac{2n\sqrt{2\lambda}\sqrt{\l(\rvw_0)}}{\mu}.
\end{align*}
By Markov's inequality, we have, with probability at least $1-\delta$, the length of the path is shorter than $R$, i.e.,
\begin{align*}
    \sum_{i=0}^\infty \| \rvw_{i+1} - \rvw_i\| \leq \frac{2n\sqrt{2\lambda}\sqrt{\l(\rvw_0)}}{\mu \delta} = R.
\end{align*}
This means that, with probability at least $1-\delta$, the whole path is covered by the ball $B(\rvw_0,R)$, namely, for all $t$,
\begin{align*}
    \| \rvw_t - \rvw_0\| \leq \sum_{i=0}^{t-1} \| \rvw_{i+1} - \rvw_i \| \leq R.
\end{align*}
For those events when the whole path is covered by the ball, we can relax the satisfaction of the PL$^*$ condition from the whole space to the ball.
\end{proof}

\end{document}